\newcommand{\stepsize}{\rho}
\newcommand{\ri}{\mathrm{relint}\, }
\title{Bringing regularized optimal transport to lightspeed: a splitting method adapted for GPUs}
\author{
 Jacob Lindbäck \\
  EECS, KTH\\
  Stockholm, Sweden\\
  \texttt{jlindbac@kth.se} 
  \And
  Zesen Wang \\
  EECS, KTH\\
  Stockholm, Sweden\\
  \texttt{zesen@kth.se}
  \And
  Mikael Johansson \\
  EECS, KTH\\
  Stockholm, Sweden\\
  \texttt{mikaelj@kth.se}
}
\begin{document}

\maketitle

\begin{abstract}
    We present an efficient algorithm for regularized optimal transport. In contrast to previous methods, we use the Douglas-Rachford splitting technique to develop an efficient solver that can handle a broad class of regularizers. The algorithm has strong global convergence guarantees, low per-iteration cost, and can exploit GPU parallelization, making it considerably faster than the state-of-the-art for many problems. We illustrate its competitiveness in several applications, including domain adaptation and learning of generative models.
\end{abstract}

\section{Introduction}

Optimal transport (OT) is an increasingly important tool for 
many ML problems. It has proven successful in a wide range of applications, including domain adaptation \cite{courty2017joint, gu2022keypoint}, learning generative models \cite{arjovsky2017wasserstein, genevay2018learning}, smooth ranking and sorting schemes \cite{NEURIPS2019_d8c24ca8}, and long-tailed recognition \cite{peng2022optimal}. The versatility of OT stems from the fact that it provides a flexible framework for comparing probability distributions that incorporates the geometric structure of the underlying sample space \cite{peyre2019computational}. Early on, practitioners were directed to LP solvers with poor scalability to solve OT problems, but this changed dramatically with the introduction of entropic regularization and the Sinkhorn algorithm for OT. Sinkhorn's simple parallelizable operations resolved a computational bottleneck of OT and 
enabled the solution of very large problems at ``lightspeed'' \cite{cuturi2013sinkhorn}. Despite the computational advantages of entropic regularization, many applications depend on sparse or structured transportation plans, and the bias entropic regularization introduces can significantly impact the performance of the downstream task being solved. In such settings, other structure-promoting regularizers are typically preferred (see e.g. \cite{courty2016optimal, liu2022sparsity, gu2022keypoint}). However, to the best of our knowledge, no framework exists 
that handles general regularizers for OT in a unifying way while enjoying the same computational properties as Sinkhorn. To this end, we study OT for a greater class of regularizers and develop an algorithm with similar computational performance as Sinkhorn. Further, we benchmark our method against the state-of-the-art, which validates its competitiveness in several applications.

For discrete probability distributions, solving regularized OT problems amounts to finding a solution to the optimization problem
\begin{align}\label{eq:ot:problem}
    \begin{array}{ll}
        \underset{X \in \R_+^{m \times n}}{\minimize} & \InP{C}{X} + h(X)\\
        \subjectto & X \ones_n = p, \,\, X^\top \ones_m = q.
    \end{array}
\end{align}
Here $p$ and $q$ are non-negative vectors that sum to $1$, $C$ is a non-negative cost matrix, and the regularizer $h$ is a function that promotes structured solutions. For any regularization, in order to solve \eqref{eq:ot:problem} fast, it is important that the large number of decision variables and the non-negativity constraint on the transportation plan are handled in a memory-efficient way. Further, it is crucial to manage the non-smoothness induced by the non-negativity constraints without altering the complexity of the algorithm. It is a common practice to solve the OT problem by considering its dual since many regularization terms give rise to dual problems with structure that can be exploited. Most notably, entropic regularization, i.e. $h(X) = \epsilon \sum_{ij} X_{ij} \log X_{ij}$, where $\epsilon>0$, enables deriving a simple alternating dual ascent scheme, which coincides with the well-known Sinkhorn-Knopp algorithm for doubly stochastic matrices \cite{sinkhorn1967concerning}. An advantage of this scheme is that it is easy to parallelize and has a low per-iteration cost \cite{cuturi2013sinkhorn}. However, choosing the regularization parameter $\epsilon$ too small renders the algorithm numerically unstable, which is particularly problematic when low-precision arithmetic is used \cite{mai2021fast}. Conversely, increasing $\epsilon$ makes the transportation plan blurrier - which can be problematic in applications when sparse solutions are desired \cite{blondel2018smooth}. 

For more general regularizers, it is often difficult to develop fast algorithms using this technique. However, if the regularizer is strongly convex, the dual (or semi-dual) will be smooth, and standard gradient-based methods can be used. This is computationally tractable when the gradients of the dual can be computed efficiently \cite{blondel2018smooth}.  Besides potentially expensive gradient computations, just as for Sinkhorn, lower regularization parameters will slow down the algorithm used, and make it unstable. Furthermore, regularizers that are not strongly convex cannot be handled in this framework. Therefore, we propose a different technique to solve \eqref{eq:ot:problem} for a broad class of regularizers, including many non-smooth and non-strongly convex functions. By using the Douglas-Rachford, we derive an algorithm with strong theoretical guarantees, that solves a range of practical problems fast and accurately. In particular, it efficiently handles regularizers that promote sparse transportation plans, in contrast to entropic regularization.

\subsection{Contributions}
We make the following contributions:
\begin{itemize}
\item We adapt the Douglas-Rachford splitting technique to regularized optimal transport, extending the recent DROT algorithm proposed in  \cite{mai2021fast}.
\item Focusing on a broad class of regularizers encompassing quadratic regularization and group lasso as special cases, we demonstrate global convergence guarantees and an accelerated local linear convergence rate. These results extend the available analysis results for related OT solvers and capture the behavior of the iterates observed in practice.
 
\item We develop an efficient GPU implementation that produces high-quality optimal transport plans faster than both the conventional Sinkhorn algorithm and the previously published DROT algorithm. We then show how the proposed solver can be used for domain adaption and to produce solutions of better quality up to 100 times faster than the state-of-the-art when implemented on GPU. 
\end{itemize}

\subsection{Related Work}

The Sinkhorn algorithm is arguably the most popular OT solver for the time being,
and it can often 
find approximate solutions fast, even for large-scale OT problems. 
Many extensions and improvements have been proposed, including variations with improved numerical stability~\cite{schmitzer2019stabilized} and memory-efficient versions based on kernel operations \cite{feydy2019interpolating}. However, to our best knowledge, no framework exists for general regularizers that enjoy comparable computational properties. As an example, a standard approach for group-lasso regularization is to linearize the regularization term and use Sinkhorn iteratively \cite{courty2014domain}. Although this approach is fairly generic, it requires that the transportation plan is recovered in every iteration, adding significant computational overhead. For strongly convex regularizers, such as quadratically regularized OT, several dual and semi-dual methods have been proposed, e.g. \cite{blondel2018smooth, lorenz2021quadratically}, with stochastic extensions  \cite{seguy2017large} and non-convex versions to deal with cardinality constraints \cite{liu2022sparsity}. However, these methods are significantly harder to parallelize, rendering them slower than Sinkhorn for larger problems \cite{peyre2019computational}. Moreover, just as for Sinkhorn, both convergence rates and numerical stability properties are significantly worsened for lower regularization parameters.

A promising research direction that has gained traction recently is to consider splitting methods as an alternative to Sinkhorn for OT problems. For instance, an accelerated primal-dual method for OT and Barycenter problems was proposed in \cite{chambolle2022accelerated}, and an algorithm for unregularized OT based on Douglas-Rachford splitting was developed in \cite{mai2021fast}. 
The convergence of splitting methods is well-studied even for general convex problems \cite{bauschke2011convex}. For Douglas-Rachford splitting, tight global linear convergence rates can be derived under additional smoothness and strong convexity assumptions \cite{giselsson2016linear}. Global linear rates have also been established for certain classes of non-smooth and non-strongly convex problems \cite{wang2017new,applegate2022faster}. Douglas-Rachford splitting can also benefit from the inherent sparsity of the problem at hand. For certain classes of problems, these algorithms can identify the correct sparsity pattern of the solution in finitely many iterations, after which a stronger local convergence rate starts to dominate~\cite{liang2017local, poon2019trajectory}. This paper contributes to this line of research, by introducing a splitting method for regularized OT with strong theoretical guarantees and unprecedented computational performance.

\subsection*{Notation}
For any $X, Y \in \R^{m \times n}$, we let $\InP{X}{Y} := \text{tr}(X^\top Y)$ and $\Vert X \Vert_F := \sqrt{\InP{X}{Y}}$. $\R_+^{m \times n}$ and $\R_-^{m \times n}$ denote the set of $m \times n$ matrices with non-negative entries and non-positive entries respectively. The projection onto $\R_+^{m \times n}$ is denoted $[X]_+$, which sets all negative entries of $X$ to zero. The subdifferential of an extended-real valued function $h$ is denoted $\partial h(X)$. We let $\iota_S$ denote the indicator function over a closed convex set $S$, i.e $\iota_S(X) = 0$ if $X\in S$ and $\iota_S(X) = \infty$ if $X \notin S$. The relative interior of a set $S$ is denoted $\ri S$.

\section{Regularized Optimal Transport}

We consider regularized OT problems on the form~\eqref{eq:ot:problem} where the function $h$ is \emph{sparsity promoting} in the sense that its value does not increase if an element of $X$ is set to zero. 

\begin{definition}[Sparsity promoting regularizers]\label{def:h:sparsity} {\ }
  $h: \R^{m \times n} \to \R \cup \{+\infty \} $ is said to be sparsity promoting if, for any $X \in \R^{m \times n}$, $h(X) \geq h(X_s)$ for every $X_s \in \R^{m \times n}$ with $(X_s)_{ij} \in \{0,\, X_{ij}\}$.
\end{definition}
We will also assume that $h$ is closed, convex, and proper over the feasible set of \eqref{eq:ot:problem}.

Notice that this function class does not necessarily induce sparsity. For instance $h = 0$, or $h= \Vert \cdot \Vert_F^2$ meet the conditions of Definition~\ref{def:h:sparsity}. Besides these two examples, the class of sparsity promoting functions include, but are not limited to, the following functions. 

\begin{itemize}
    \item $h(X) = \lambda \sum_{g \in \mathcal{G}} \Vert X_g \Vert_F $ (group lasso OT)
    \item $h(X) = \sum_{ij} X_{ij} \mathrm{arcsinh}({X_{ij}/\beta }) - (X_{ij}^2 + \beta^2)^{1/2}$ (hypentropic regularization)
    \item $h(X) =  \sum_{ij} w_{ij} \vert X_{ij} \vert$ (weighted $\ell_1$-regularization)
    \item $h(X) = \sum_{(ij) \in \mc{S}} \iota_{X_{ij}=0}(X)$ (constrained OT)
\end{itemize}
Convex combinations of sparsity promoting functions are also sparsity promoting. Moreover, many regularized OT problems can be converted to only involve sparsity promoting regularizers, such as Gini-regularized OT, and regularizers on the form $\Vert X - A \Vert_F^2$ (see e.g. \cite{roberts2017gini}).

In this paper, we will develop a scalable algorithm for solving OT problems with sparsity-promoting regularizers. Besides strong theoretical guarantees, the algorithm is  numerically stable and can be efficiently implemented on a GPU. Our approach builds on the recently proposed algorithm for unregularized OT, which the authors refer to as DROT \cite{mai2021fast}. We extend the DROT algorithm to regularized OT problems, improve the theoretical convergence guarantees, and develop a faster GPU kernel. As a result, our algorithm allows us to solve regularized OT problems faster, and more accurately in a more generic fashion compared to currently available solvers. To derive the algorithm, we review the splitting method for OT developed in \cite{mai2021fast}.
\subsection{Douglas-Rachford splitting for OT}
Douglas-Rachford splitting is 
a technique for solving optimization problems on the form 
\begin{align}\label{eq:composite:problem}
    \minimize_{x\in \R^n} \; f(x) + g(x)
\end{align}
using the fixed point update $y_{k+1} = T(y_k)$, where
\begin{align}\label{eq:dr:operator}
     T(y) = y + \prox{\stepsize g}{2 \prox{\stepsize f} {y} - y}  - \prox{\stepsize f} {y}
\end{align}
and $\stepsize>0$ is a stepsize parameter. If $y^\star$ is a fixed point of $T$ then $x^\star = \prox{\stepsize f}{y^\star}$ is a solution to~\eqref{eq:composite:problem}. Often a third iterate $z_k$ is introduced, 
and the Douglas-Rachford iterations are expressed as

\begin{align}\label{eq:dr:three:updates}
    x_{k+1} = \prox{\stepsize f}{y_k}, \quad z_{k+1} = \prox{\stepsize g}{2x_{k+1} - y_k} \quad
    y_{k+1} = y_k + z_{k+1} - x_{k+1}. 
\end{align}
As long as $f$ and $g$ are closed and convex (possibly extended-real valued), and an optimal solution exists, the DR-splitting method converges to a solution (see, e.g. Theorem~25.6 in \cite{bauschke2011convex}). 
Under additional assumptions of $f$ and $g$ in~\eqref{eq:composite:problem}, the method has even stronger convergence guarantees~\cite{giselsson2016linear}. For a given problem that can be formulated on the form \eqref{eq:composite:problem}, there are typically many ways to partition the problem between $f$ and $g$. This must be done with care: a poor split can result in an algorithm with a higher per-iteration cost than necessary, or one which requires more iterations to converge than otherwise needed. Often it is difficult to achieve both simultaneously, which poses a challenging trade-off between the iterate complexity and the per-iteration cost of the resulting algorithm.

To facilitate the derivation of the algorithm, we let $\mc{X} = \{X \in \R^{m \times n}: X \ones_n = p, \, X^\top \ones_m = q \}$ and introduce two indicator functions $\iota_{\R_+^{m \times n}}$ and $\iota_{\mathcal{X}}$, that correspond to the constraints of \eqref{eq:ot:problem}. 
Recognizing that the projection onto $\mc{X}$ is easy to compute, the authors of~\cite{mai2021fast}  proposed the splitting 
\begin{align*}
f(X) = \InP{C}{X} + \iota_{\R_+^{m \times n}}(X) \;\;\mbox{  and  }\;\; g(X)= \iota_{\mathcal{X}}(X)\
\end{align*}
and demonstrated that 
the update \eqref{eq:dr:operator} can 
be simplified to 
\begin{align}\label{eq:simplified:update}
    X_{k+1} = [Y_k-\stepsize C]_+, \quad Y_{k+1} = X_{k+1} + \phi_{k+1} \ones_n^\top + \ones_m \varphi_{k+1}^\top 
\end{align}
where $\phi_k$ and $\varphi_k$ are vectors given via the iterative scheme:
\begin{align*}
        r_{k+1} &= X_{k+1}\ones_n -p, 
        \quad  s_{k+1} = X_{k+1}^\top \ones_m -q, \quad \eta_{k+1} = f^\top r_{k+1} / (m+n), \nonumber\\ \theta_{k+1} &= \theta_k - \eta_{k+1}, \qquad
        a_{k+1} = a_{k} - r_{k+1}, \qquad\, b_{k+1} = b_{k} - s_{k+1}, 
\end{align*}
\begin{align}\label{eq:phi:update}\phi_{k+1} = \left(a_k - 2 r_{k+1} + (2\eta_{k+1} - \theta_k) f\right) / n, \nonumber\\ \varphi_{k+1} = \left(b_k - 2 s_{k+1}  + (2\eta_{k+1} - \theta_k) e\right)/m.
\end{align}
The most expensive operation in~\eqref{eq:phi:update} is the matrix-vector product in the $r$ and $s$ updates. Nonetheless, these operations can efficiently be parallelized on a GPU. Since $Y_k$ can be eliminated, the splitting scheme only needs to update the transportation plan $X_k$ and the ancillary vectors and scalars in \eqref{eq:phi:update}.

\section{DR-splitting for regularized OT}
When extending the DR-splitting scheme to the regularized setting, it is crucial to incorporate $h$ in one of the updates of \eqref{eq:dr:three:updates} in a way that makes the resulting algorithm efficient. Specifically, if we let
\begin{align}\label{eq:f:g}
    f(X) = \InP{C}{X} + \iota_{\R_+^{m \times n}}(X) + h(X), 
 \text{ and } g(X)= \iota_{\mathcal{X}}(X),
 \end{align}
the first update of \eqref{eq:dr:three:updates} reads:
$$ X_{k+1} = \prox{\rho f}{Y_k} = \prox{\rho h + \iota_{\R_+^{m \times n}}}{Y_k - \rho C}.$$
This update is efficient given that $\prox{\rho h + \iota_{\R_+^{m \times n}}}{\cdot}$ is easy to compute. This is indeed the case when sparsity promoting regularizers are considered. We present the result in Lemma~\ref{lemma:composition}. 
\begin{lemma}\label{lemma:composition}
    Let $f(X)= \InP{C}{X} + \iota_{\R_+^{m \times n}}(X) + h(X)$ where $h(X)$ is sparsity promoting. Then $\prox{\stepsize f}{X} = \prox{\stepsize h}{[X-\stepsize C]_+}$.
\end{lemma}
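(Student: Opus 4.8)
The plan is to start from the definition of the proximal map and reduce it, by completing the square, to a prox of $h$ alone over the non-negative orthant. Writing $\prox{\stepsize f}{X}$ as the minimizer over $Z$ of $\InP{C}{Z} + \iota_{\R_+^{m \times n}}(Z) + h(Z) + \frac{1}{2\stepsize}\Vert Z - X\Vert_F^2$, I would absorb the linear term into the quadratic via the identity $\InP{C}{Z} + \frac{1}{2\stepsize}\Vert Z - X\Vert_F^2 = \frac{1}{2\stepsize}\Vert Z - W\Vert_F^2 + c$, where $W := X - \stepsize C$ and $c$ does not depend on $Z$. Since the indicator $\iota_{\R_+^{m \times n}}$ only restricts the feasible set, it then remains to show that the unique minimizer of $h(Z) + \frac{1}{2\stepsize}\Vert Z - W\Vert_F^2$ over $Z \geq 0$, call it $Z^\star$, coincides with $\prox{\stepsize h}{[W]_+}$, the unconstrained minimizer of $h(Z) + \frac{1}{2\stepsize}\Vert Z - [W]_+\Vert_F^2$, call it $\hat Z$. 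Uniqueness of both minimizers is guaranteed because $h$ is closed and convex while the quadratic term is strongly convex.

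The key step, and the main obstacle, is a support argument that leans on Definition~\ref{def:h:sparsity}; this is precisely what lets me bypass the fact that $h$ need not be separable (e.g.\ group lasso), so that the prox cannot simply be computed entrywise. I claim both $Z^\star$ and $\hat Z$ are non-negative and vanish on every index $(i,j)$ with $W_{ij} \leq 0$. For $\hat Z$: if some entry were nonzero where $([W]_+)_{ij}=0$, then zeroing it out does not increase $h$ by the sparsity-promoting property, while it weakly decreases the quadratic term $(Z_{ij} - ([W]_+)_{ij})^2 = Z_{ij}^2$, and strictly so unless $Z_{ij}=0$, contradicting optimality. For $Z^\star$: zeroing the entries where $W_{ij} < 0$ likewise does not increase $h$, whereas for $Z^\star \geq 0$ and $W_{ij} < 0$ one has $(Z^\star_{ij} - W_{ij})^2 \geq W_{ij}^2$ with equality only at $Z^\star_{ij}=0$, so optimality again forces $Z^\star_{ij}=0$ there.

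Finally I would compare the two objectives on the relevant set. A direct expansion gives, for any $Z \geq 0$,
\[
\Vert Z - W\Vert_F^2 = \Vert Z - [W]_+\Vert_F^2 + \sum_{ij:\, W_{ij}<0}\bigl(2\, Z_{ij}\vert W_{ij}\vert + W_{ij}^2\bigr),
\]
so the two objectives differ by the constant $\sum_{ij:\,W_{ij}<0} W_{ij}^2$ exactly on the set $\mathcal{A}$ of non-negative matrices that vanish wherever $W_{ij}<0$. By the previous paragraph both $Z^\star$ and $\hat Z$ lie in $\mathcal{A}$. Since $\hat Z$ is admissible for the constrained problem and $Z^\star$ for the unconstrained one, the two minimality inequalities together with the constant-gap relation on $\mathcal{A}$ force each point to minimize both objectives. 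Uniqueness then yields $Z^\star = \hat Z$, which is the claimed identity $\prox{\stepsize f}{X} = \prox{\stepsize h}{[X - \stepsize C]_+}$.
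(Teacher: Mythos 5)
Your proof is correct in its overall structure, but it takes a genuinely different route from the paper's. The paper first establishes an auxiliary lemma (its Lemma A.1) stating that $\prox{\stepsize h}{\cdot}$ maps zero entries to zero and positive entries to non-negative values -- proved with the same ``zero out one entry and compare objectives'' device you use -- and then concludes by verifying the subdifferential-containment hypothesis $\partial \iota_{\R_+^{m \times n}}(\prox{\stepsize h}{X}) \supseteq \partial \iota_{\R_+^{m \times n}}(X)$ of the prox-decomposition theorem of \cite{yu2013decomposing}, which directly gives $\prox{\stepsize(\iota_{\R_+^{m\times n}}+h)}{\cdot} = \prox{\stepsize h}{[\cdot]_+}$; the linear term $\InP{C}{\cdot}$ is absorbed by the same translation $W = X - \stepsize C$ as in your completing-the-square step. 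You instead avoid any external decomposition theorem: you show both candidate minimizers lie in the set $\mathcal{A}$ of non-negative matrices vanishing on $\{(i,j): W_{ij}<0\}$, observe that the constrained and unconstrained objectives differ by a constant on $\mathcal{A}$, and close the argument with the two optimality inequalities $F_1(Z^\star) \le F_1(\hat Z) = F_2(\hat Z) + c \le F_2(Z^\star) + c = F_1(Z^\star)$ plus uniqueness from strong convexity. Your version is self-contained and elementary; the paper's version is shorter and isolates a reusable structural fact (the prox of a sparsity-promoting function commutes, in Yu's sense, with projection onto the orthant).

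One hole needs patching: you assert that $\hat Z = \prox{\stepsize h}{[W]_+}$ is non-negative, but your argument only establishes $\hat Z_{ij} = 0$ at entries where $([W]_+)_{ij} = 0$; you never argue $\hat Z_{ij} \geq 0$ at entries where $W_{ij} > 0$. This matters, because membership of $\hat Z$ in $\mathcal{A}$ -- in particular its feasibility for the constrained problem -- is exactly what legitimizes the inequality $F_1(Z^\star) \le F_1(\hat Z)$. The fix is one line, by the same device you already use: if $\hat Z_{ij} < 0$ while $([W]_+)_{ij} \geq 0$, then replacing that entry by zero does not increase $h$ (sparsity promotion) and strictly decreases $(\hat Z_{ij} - ([W]_+)_{ij})^2$, contradicting optimality. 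This is precisely the second half of the paper's Lemma A.1, so your proof becomes complete once you add it.
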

Lemma~\ref{lemma:composition} enables us to integrate the regularizer into the algorithm with only a minor update. Using the split of \eqref{eq:f:g}, the updated Douglas-Rachford algorithm, which we will refer to as RDROT, follows:
\begin{align}\label{eq:simplified:update:reg}
    X_{k+1} = \prox{\rho h}{[Y_k-\stepsize C]_+}, \quad
    Y_{k+1} = X_{k+1} + \phi_{k+1} \ones_n^\top + \ones_m \varphi_{k+1}^\top.
\end{align}
Under the assumption that $h$ is closed, convex, and proper over the feasible set of \eqref{eq:ot:problem}, 
Theorem~25.6 in~\cite{bauschke2011convex} guarantees that RDROT converges to a solution of~\eqref{eq:ot:problem}. Before we derive even stronger convergence properties of the algorithm, we describe a few specific instances in more detail.

\textbf{Quadratic Regularization} Letting $h(X) = \frac{\alpha}{2} \Vert X \Vert_F^2$ yields 
     $$X_{k+1} = {[Y_k-\stepsize C]_+}/(1+\stepsize \alpha).$$
A nice feature of quadratic regularization is that it has similar limiting properties as entropic regularization when used in OT-divergences \cite{di2020optimal}. But in contrast to entropically regularized OT, letting $\alpha \to 0$ does not cause numerical instability for our method. Furthermore, the quadratic term makes the objective of \eqref{eq:ot:problem} strongly convex, which renders its solution unique. This can be helpful when using OT to define a loss function for e.g. training generative models \cite{genevay2018learning} or adjusting for long-tailed label distributions \cite{peng2022optimal}, since it smoothens the OT cost while preserving sparsity. 

\textbf{Group Lasso Regularization} With $h(X) = \lambda \sum_{g \in \mathcal{G}} \Vert X_g \Vert_F $, where $\mathcal{G}$ is a collection of disjoint index sets, the RDROT update becomes 
\begin{align*}
    \bar{X}_{k+1} = {[Y_k-\stepsize C]_+}, \quad
    X_{k+1, g} = \bigg[1 - \frac{\lambda}{\Vert \bar{X}_{k+1,g} \Vert_F}\bigg]_+ \bar{X}_{k+1,g}, \quad g \in \mc{G}.
\end{align*}
This regularizer has been used extensively used for OT-based domain adaptation \cite{courty2016optimal}. The rationale is that each unlabeled data point in the test (target) domain should only be coupled to data points of the training (source) domain that share the same label. This can be accomplished by organizing the data so that the rows of $X$ correspond to data points in the training domain and the columns to points in the test domain, and then using a sparsity-inducing group-lasso regularizer in \eqref{eq:ot:problem}. In this setting, $\mc{G}$ is the collection of subblocks of the columns that correspond to each label. Consequentially, this regularizer will promote solutions that map each data point in the test domain to a single label.

\subsection{Rate of convergence}\label{subsec:rate:of:conv}

It is well-known that DR-splitting finds an $\epsilon$-accurate solution in $O(1/\epsilon)$ iterations for general convex problems \cite{he20121}. This is a significant improvement over Sinkhorn's iteration complexity of $O(1/\epsilon^2)$ \cite{lin2019efficient}, but the Sinkhorn iterations are very cheap to execute. A key contribution of~\cite{mai2021fast} was the development of a  GPU kernel that executes the DR updates so quickly that the overall method is faster than Sinkhorn for many problem instances. They also demonstrated a global linear convergence by exploiting the geometrical properties of LPs, but the convergence factors are very difficult to quantify or use for step-size tuning. As illustrated in Figure~\ref{fig:conv:sparsity}, and discussed in more detail below, neither the global linear rate nor the $1/k$-rate are globally tight. Instead, the stopping criterion decays as $O(1/k)$ during the initial steps of the RDROT algorithm, but then begins to converge at a fast linear rate. Hence, existing convergence analyses that neglect this local acceleration are unsatisfactory.

\begin{figure*}[!ht]
   \vskip-0.1in
    \centering
    \begin{subfigure}[b]{0.45\textwidth}
        \centering
        {\includegraphics[width=1.\textwidth]{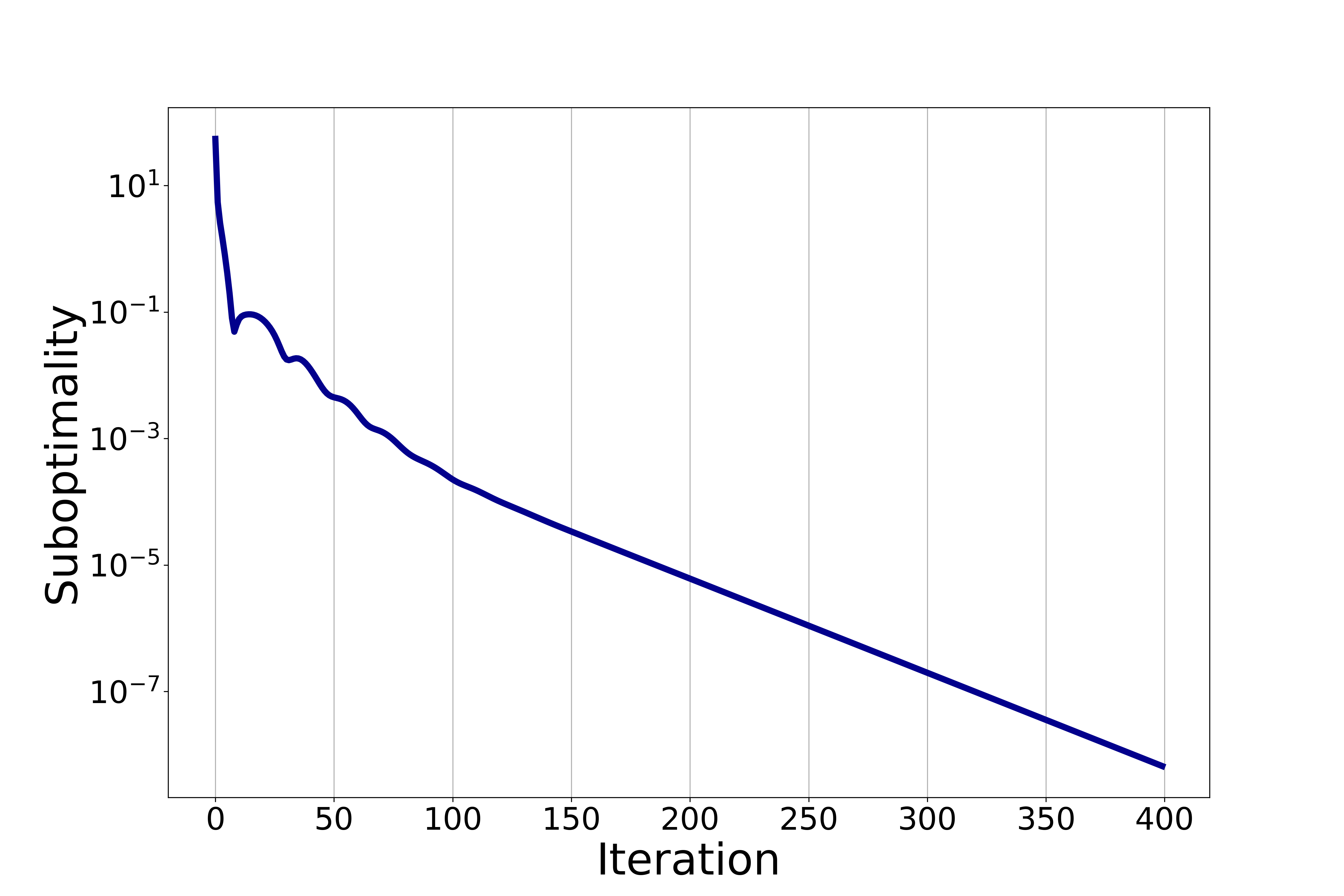}}
    \end{subfigure}
    \hskip-0.22in
    \begin{subfigure}[b]{0.45\textwidth}
        \centering
        {\includegraphics[width=1.\textwidth]{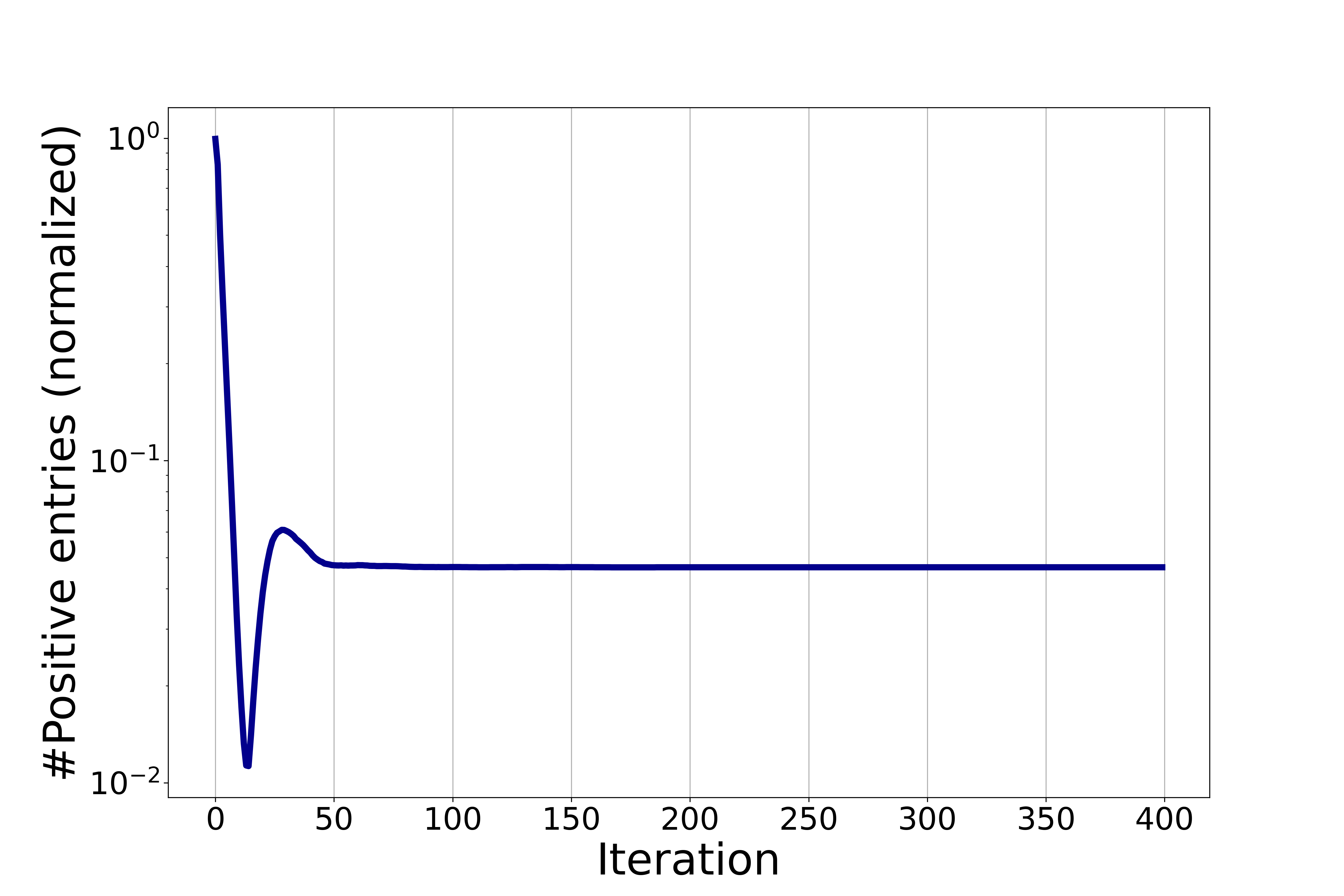}}
    \end{subfigure}
      \caption{An illustration of the convergence behavior of the algorithm for a quadratically regularized problem. The stopping criterion and the number of positive entries were computed in every iteration. Notice that the iteration number at which the sparsity pattern stabilizes is aligned with when the linear convergence starts. In this example, the stepsize is chosen to make this effect more evident.}\label{fig:conv:sparsity}
\end{figure*}

In this section, we will develop a convergence analysis for DR splitting on regularized OT problems that explains this behavior. For many problems, 
a sublinear rate dominates up until a point in which the correct sparsity pattern is identified, and then a fast linear rate starts to dominate. Based on previous work on local convergence properties of DR-splitting \cite{liang2017local}, we establish that our algorithm \eqref{eq:simplified:update:reg} identifies the true sparsity pattern of the solution in finitely many iterations. When the sparsity pattern is identified, a fast linear rate typically starts to dominate over the sublinear rate. 

To derive these guarantees, we need an additional assumption on the optimal solution of \eqref{eq:ot:problem}.

\begin{assumption}\label{assumption:reg:cond}
    Let that $Y^\star$ is a fixed point of \eqref{eq:dr:operator} such that $Y_k \to Y^\star$, and let $X^\star = \prox{\rho f }{Y^\star} = \prox{\rho h}{[Y^\star- \rho C ]_+}$. We then assume that:
\begin{align}\label{eq:reg:cond}
    \frac{1}{\stepsize}\bigg( Y^\ast - X^\ast \bigg) \in  \ri \partial f(X^\star).
\end{align}
\end{assumption}

Assumption~\ref{assumption:reg:cond} can be seen as a stricter version of the traditional optimality condition. It is a common assumption in analyses of active constraint identification and can be traced back to \cite{burke1988identification}. We consider this assumption to be fairly weak since it holds for most relevant OT problems, except for some very specific cost matrices. An extensive discussion is included in the supplementary material.

Under the regularity conditions of \eqref{eq:reg:cond}, we derive the following two convergence results.
\begin{theorem}\label{thm:finite:iter} Let $h(X)$ be sparsity promoting and twice continuously differentiable for $X>0$. If  Assumption~\ref{assumption:reg:cond} holds, then there is a $K \geq 1$ such that for all $k\geq K$, $X_{k, \,ij} = 0$ if and only if $X_{ij}^\star = 0$.
\end{theorem}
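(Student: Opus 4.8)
The statement is a finite \emph{support-identification} result: after finitely many iterations the zero pattern of $X_k$ freezes to that of $X^\star$. The plan is to reduce it to a standard active-manifold identification argument. Write $\mathcal{A} = \{(i,j): X^\star_{ij}=0\}$ and $\mathcal{I} = \{(i,j): X^\star_{ij}>0\}$. First I would record the two convergence facts that feed the identification machinery. Since $Y_k \to Y^\star$ and the proximal map is firmly nonexpansive, hence continuous, the first update in \eqref{eq:simplified:update:reg} gives $X_k = \prox{\rho f}{Y_{k-1}} \to \prox{\rho f}{Y^\star} = X^\star$. Moreover, optimality of the proximal step yields the subgradient inclusion $V_k := \rho^{-1}(Y_{k-1}-X_k) \in \partial f(X_k)$, and since $Y_{k-1}\to Y^\star$ and $X_k\to X^\star$ we get $V_k \to V^\star := \rho^{-1}(Y^\star - X^\star)$, which by Assumption~\ref{assumption:reg:cond} lies in $\ri \partial f(X^\star)$.

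The ``only if'' direction is the easy one and follows purely from continuity: for $(i,j)\in\mathcal{I}$ we have $X^\star_{ij}>0$, so $X_{k,ij}>0$ for all $k$ large. Contrapositively, $X_{k,ij}=0$ forces $X^\star_{ij}=0$, establishing one implication of the claimed equivalence.

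The substance is the converse, i.e.\ that the active entries are driven to \emph{exactly} zero (not merely small) in finite time. Here I would invoke the active-manifold identification framework used in \cite{liang2017local,burke1988identification}. Let $\mathcal{M} = \{X : X_{ij}=0 \text{ for all } (i,j)\in\mathcal{A}\}$ be the support manifold. The indicator $\iota_{\R_+^{m\times n}}$ is partly smooth relative to the face of the orthant cut out by $\mathcal{A}$, whose affine hull is $\mathcal{M}$; because $h$ is twice continuously differentiable on $\{X>0\}$, its restriction contributes a $C^2$ term and does not destroy this partial smoothness, so $f$ is partly smooth at $X^\star$ relative to $\mathcal{M}$. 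The identification theorem then applies: given $X_k\to X^\star$, $V_k\in\partial f(X_k)$, $V_k\to V^\star$, and the nondegeneracy condition $V^\star\in\ri\partial f(X^\star)$ of Assumption~\ref{assumption:reg:cond}, there is a $K$ with $X_k\in\mathcal{M}$ for all $k\ge K$, i.e.\ $X_{k,ij}=0$ whenever $(i,j)\in\mathcal{A}$. Combining with the previous paragraph yields the stated equivalence. The mechanism is transparent in the separable case (e.g.\ $h=0$ or quadratic): there $\ri\partial f(X^\star)$ forces the strict inequality $(Y^\star-\rho C)_{ij}<0$ at every active coordinate, so by $Y_k\to Y^\star$ we get $(Y_k-\rho C)_{ij}<0$ and hence $X_{k+1,ij}=[Y_k-\rho C]_{+,ij}=0$ for $k$ large.

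The main obstacle is making the identification step rigorous for \emph{coupled} sparsity-promoting regularizers such as group lasso, where a single entry $X_{k,ij}$ can vanish either because its whole group is thresholded to zero by $\prox{\rho h}{\cdot}$ or because the nonnegativity constraint is active; one must check that $\mathcal{M}$ is indeed the correct identification manifold for the composite $f$, and that a qualification condition holds so that $\partial f(X^\star) = C + \partial(\iota_{\R_+^{m\times n}}+h)(X^\star)$ and the inclusion $V^\star\in\ri\partial f(X^\star)$ encodes the strict complementarity that forces the zeros to be exact rather than asymptotic. A secondary point to verify is the mild regularity required by the identification theorem (essentially $f(X_k)\to f(X^\star)$, or prox-regularity of $f$), which here is immediate from convexity and the continuity already established.
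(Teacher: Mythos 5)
Your proof is correct, and it rests on the same core mechanism as the paper's: partial smoothness of $f$ at $X^\star$ relative to the support manifold, the nondegeneracy condition of Assumption~1, and a Hare--Lewis-type finite identification theorem fed by the prox-step subgradient inclusion. The execution differs in one respect worth noting. The paper never applies the identification theorem to $f$ alone; it forms the bivariate function $F(X,Z)=f(X)+g(Z)-\rho^{-1}\InP{X-Z}{Y^\star-X^\star}$, proves $F(X_{k+1},Z_{k+1})\to f(X^\star)+g(X^\star)$ and $\mathrm{dist}\{0,\partial F(X_{k+1},Z_{k+1})\}\to 0$ in both the $X$- and $Z$-blocks, and only then invokes Theorem~5.3 of Hare--Lewis. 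The $g$-block and the second manifold $\mc{M}_2=\mc{X}$ are carried along because the identical setup is immediately reused to obtain the local linear rate of Theorem~2 from Liang et al. For Theorem~1 in isolation, your leaner route---$V_k=\rho^{-1}(Y_{k-1}-X_k)\in\partial f(X_k)$, $V_k\to V^\star\in\ri\partial f(X^\star)$, $X_k\to X^\star$, plus prox-regularity and $f(X_k)\to f(X^\star)$ from convexity---is sufficient, since identification of $\mc{M}_2$ is trivial ($Z_k\in\mc{X}$ by construction) and plays no role in the sparsity claim; your argument is essentially the $X$-block of the paper's proof, stated as a tilted identification theorem rather than via the auxiliary function $F$. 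Your separate treatment of the easy direction is also consistent: the paper's manifold $\mc{M}_1$ encodes both sign conditions, so manifold membership delivers both implications at once, whereas your affine $\mathcal{M}$ requires the continuity argument you give. Finally, the issue you flag as the main obstacle---verifying that the composite $f$ is genuinely partly smooth relative to the support manifold for coupled regularizers such as group lasso---is treated with exactly the same brevity in the paper (it is asserted directly from the $C^2$-on-$\R_{++}^{m\times n}$ hypothesis), so this is a shared gloss rather than a gap specific to your argument.
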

For our problem, this result implies that there is a $K$ after which the sparsity pattern does not change. To prove this, we first show that the sparsity-promoting and smoothness assumptions imply that $f$ and $g$ are partly smooth with respect to two manifolds and then invoke results presented in \cite{liang2017local}. Moreover, Theorem~\ref{thm:finite:iter} allows us to derive the following local convergence rate guarantees:
\begin{theorem}
    Assume that the conditions stated in Theorem~\ref{thm:finite:iter} hold. If $h$ is locally polyhedral, then RDROT enjoys a local linear rate: When $k\geq K$, then
    $$ \Vert X_{k+1} - X^\star \Vert_F \leq  r^{k-K}\Vert Y_K - Y^\star \Vert_F, \quad r \in (0,\,1).$$
\end{theorem}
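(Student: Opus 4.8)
The plan is to reduce the statement to the linear convergence of the Douglas--Rachford $Y$-iterates generated by $Y_{k+1} = T(Y_k)$, with $T$ the operator in \eqref{eq:dr:operator} for the split \eqref{eq:f:g}, and then transfer the rate to the $X$-iterates. Since $X_{k+1} = \prox{\rho f}{Y_k}$ and $X^\star = \prox{\rho f}{Y^\star}$, and $\prox{\rho f}{\cdot}$ is firmly nonexpansive, we have $\Vert X_{k+1} - X^\star \Vert_F \le \Vert Y_k - Y^\star \Vert_F$. Hence it suffices to prove the geometric bound $\Vert Y_k - Y^\star \Vert_F \le r^{k-K}\Vert Y_K - Y^\star \Vert_F$ for $k \ge K$, after which chaining the two inequalities gives the claim.

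First I would use the identification result to freeze the local structure of the problem. By Theorem~\ref{thm:finite:iter}, for all $k \ge K$ the zero pattern of $X_k$ equals that of $X^\star$, so the active face of the constraint $X \in \R_+^{m\times n}$ no longer changes. Because $h$ is locally polyhedral, near $X^\star$ it agrees with an affine function on the active piece; together with the linear term $\InP{C}{X}$ and the frozen activity of the orthant, this makes $f$ coincide, on a neighborhood, with an affine function restricted to an affine manifold $\mathcal{M}_f$ with tangent subspace $T_f$. Likewise $g = \iota_{\mathcal{X}}$ is the indicator of the affine subspace $\mathcal{X}$, whose tangent is $T_g = \{Z : Z\ones_n = 0,\ Z^\top \ones_m = 0\}$. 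The strict-complementarity content of Assumption~\ref{assumption:reg:cond} is exactly what guarantees that this reduction is \emph{exact} on a fixed neighborhood of $Y^\star$ rather than only to first order, since it forces the thresholds defining the active set (via $\prox{\rho f}{Y} = \prox{\rho h}{[Y-\rho C]_+}$ of Lemma~\ref{lemma:composition}) to be strict. This is the locally polyhedral specialization of the partial-smoothness setup already established for Theorem~\ref{thm:finite:iter}.

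On such a neighborhood both proximal maps are orthogonal projections onto affine manifolds, $\prox{\rho f}{\cdot} = P_f(\cdot) + c_f$ and $\prox{\rho g}{\cdot} = P_g(\cdot) + c_g$, where $P_f, P_g$ are the orthogonal projections onto $T_f, T_g$. Writing $R_f = 2P_f - I$, $R_g = 2P_g - I$ for the induced reflections, $T$ is affine there with linear part $M = \tfrac{1}{2}(I + R_g R_f)$, so that $Y_{k+1} - Y^\star = M(Y_k - Y^\star)$ for all $k\ge K$; the iterates remain in the neighborhood because $Y_k \to Y^\star$ globally. Since $R_g R_f$ is orthogonal, $M$ is a normal averaged operator whose eigenvalues equal $\tfrac{1}{2}(1+\lambda)$ for $\lambda$ on the unit circle; its fixed subspace $\mathrm{Fix}(M)$ corresponds to $\lambda = 1$, while on $\mathrm{Fix}(M)^\perp$ it contracts with operator norm $r = \cos\theta_F$, the cosine of the Friedrichs angle between $T_f$ and $T_g$, which is strictly below $1$ in this finite-dimensional setting. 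The residual $Y_K - Y^\star$ can have no component in $\mathrm{Fix}(M)$, for such a component would be preserved by every $M$ and prevent $Y_k \to Y^\star$; hence $Y_K - Y^\star \in \mathrm{Fix}(M)^\perp$ and $\Vert Y_k - Y^\star\Vert_F \le r^{k-K}\Vert Y_K - Y^\star\Vert_F$. Combining with firm nonexpansiveness of the prox yields
\[
\Vert X_{k+1} - X^\star \Vert_F = \Vert \prox{\rho f}{Y_k} - \prox{\rho f}{Y^\star} \Vert_F \le \Vert Y_k - Y^\star \Vert_F \le r^{k-K}\Vert Y_K - Y^\star \Vert_F .
\]
This identification-plus-linearization argument follows the template of \cite{liang2017local}, whose local linear rate for polyhedral partly smooth functions can be invoked directly once $\mathcal{M}_f, \mathcal{M}_g$ are shown to be affine.

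The main obstacle is establishing that the affine reduction of $T$ holds \emph{exactly} on a fixed neighborhood of $Y^\star$ for all $k \ge K$, rather than merely in a linearized sense. This needs two ingredients to be combined carefully: Theorem~\ref{thm:finite:iter}, which freezes the orthant activity and the active polyhedral piece of $h$, and the strict-complementarity part of Assumption~\ref{assumption:reg:cond}, which keeps the prox on the same affine face for all nearby $Y$. One must also identify $T_f$ and $T_g$ precisely enough to match the hypotheses of the Friedrichs-angle rate; once that is done, the spectral estimate $r = \cos\theta_F < 1$ is routine in finite dimensions.
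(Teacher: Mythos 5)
Your proposal is correct and follows essentially the same route as the paper: the paper establishes partial smoothness of $f$ and $g$ with respect to the sparsity manifold and $\mathcal{X}$ together with the non-degeneracy condition from Assumption~\ref{assumption:reg:cond}, and then proves the local linear rate simply by invoking Theorem~5.6 of \cite{liang2017local}. What you have written is, in effect, an unpacking of the inside of that cited theorem for the locally polyhedral case (exact affine reduction of the DR operator after identification, followed by the Friedrichs-angle spectral contraction and transfer to the $X$-iterates via nonexpansiveness of the prox), so your argument matches the paper's proof with more of the black box made explicit.
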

Some examples of which the local polyhedral assumption on $h$ holds include the unregularized case, and the weighted $\ell_1$ regularization: $h(X) = \sum_{ij} a_{ij}\vert X_{ij} \vert$, where $a_{ij} \geq 0$. Further, the optimal rate is always attained independently of the chosen stepsize \cite{liang2017local}. Therefore, one can expect a sublinear rate until the correct sparsity pattern of the transportation plan is identified. From that point onward, a fast linear rate will dominate. In experiments, however, we observe that the local linear rate seems to hold for many more regularizers, although it is, in general, difficult to quantify using this proof technique. A similar observation has also been made in \cite{poon2019trajectory}.

\paragraph{Stepsize selection}
The stepsize implicitly determines how the primal residual descent is balanced against the descent of the duality gap. When the cost has been normalized so that $\Vert C \Vert_\infty = 1$, a stepsize that performs well regardless of $h$ is $\rho = 2(m+n)^{-1}$. This stepsize was proposed for DR-splitting on the unregularized problem in \cite{mai2021fast}. We use this stepsize throughout, and even without any tuning, our method is consistently competitive. 

\paragraph{Initialization}
A reasonable initialization of the algorithm is to let $X_0 = pq^\top$, and $\phi_0 = \mathbf{0}_m$, $\varphi_0 = \mathbf{0}_n$. However, when using the stepsize specified above, this will result in that $X_k = \mathbf{0}_{m \times n}$ for $1 \leq k \leq N$, where $N=O(\max(m,n))$. By initializing $\phi_0 = (3(m+n))^{-1}(1 + m/(m+n)) \ones_m$, and $\varphi_0 = (3(m+n))^{-1}(1 + n/(m+n)) \ones_n$, one skips these first $N$ iterations, resulting in a considerable speed up the algorithm, especially for larger problems. The derivation is added to the supplementary material. There is potential for
further improvements (see e.g. \cite{thornton2023rethinking}), since $\rho^{-1}\phi_k$, $\rho^{-1}\varphi_k$ are related to dual potentials. But for simplicity, we use this strategy throughout the paper.

\paragraph{Stopping criteria}
One drawback of the Sinkhorn algorithm is that its theoretically justified subpotimality bounds are costly to compute from the generated iterates. This is usually addressed by only computing the primal residuals $ \Vert X_k \ones_m -p \Vert$, and $\Vert X_k^\top \ones_n -q \Vert$ every $M$ iterations and terminate when both residuals are less than a chosen tolerance. In contrast, our algorithm computes the primal residuals in every iteration (see $r_k$ and $s_k$ in \eqref{eq:phi:update}), so evaluating the stopping criterion does not add extra computations. Moreover, the iterates $\phi_k$ and $\varphi_k$ will, in the limit, be proportional to optimal dual variables, and can hence be used to estimate the duality gap and the dual residual in every iteration. For more details, we refer to the supplementary material.

\subsection{GPU implementation}

Similar to the approach developed in \cite{mai2021fast}, the RDROT algorithm can be easily parallelized on a GPU, as long as the regularizer is sparsity promoting and its prox-mapping is simple to evaluate. We have developed GPU kernels for RDROT with quadratic and group-lasso regularization, respectively.

For the GPU implementation, the main computation time is spent on updating the $X$ matrix as described in \eqref{eq:simplified:update:reg}. Compared with (unregularized) DROT \cite{mai2021fast}, the RDROT algorithm additionally includes the evaluation of the prox-mappings of the quadratic and group-lasso regularizers.
\begin{itemize}
    \item For the quadratically regularized case, it multiplies each element of $[Y_k-\stepsize C]_+$ with a constant scale to update $X_{k}$. This only requires a minor change and results in practically the same per-iteration-cost as for the unregularized case.
    \item The group-lasso regularizer penalizes the norm of the group, which requires: (1) a reduction for gathering the square of the elements and computing the scale for the group; and (2) a broadcast to apply the scale to $\bar{X}_{k+1,g}$. For small problem sizes, the reduction and the broadcast can be done within a thread block on the shared memory. For large problem sizes, when the elements in one group can not fit in a single thread block, the results of the reduction are stored in global memory, and an additional kernel function is introduced to apply the scale for the group. This overhead increases the per-iteration cost a little, but we show in our experiments that it is still highly competitive compared to other methods. 
\end{itemize}

The remaining parts of the algorithm are for updating vectors, which has much smaller time complexity than the $X$ matrix updates. We detail the implementation in the supplementary material.

\subsection{Backpropagation via Pytorch and Tensorflow wrappers}

In many applications, e.g. \cite{genevay2018learning, salimans2018improving, peng2022optimal}, it is useful to use the solution of \eqref{eq:ot:problem} to construct a loss function. Most notably, its optimal value of \eqref{eq:ot:problem} is directly related to the Wasserstein distance, which is a natural measure of the proximity between distributions that enjoys many advantageous properties (see \cite{peyre2019computational} for an extensive treatment of this topic). When training generative models, for example, the objective is to transform a simple distribution into one that resembles the data distribution. It is thus natural to use the Wasserstein distance to construct loss functions for generative models, see e.g  \cite{arjovsky2017wasserstein}. But in order to backpropagate through such loss functions, one must differentiate the optimal value of \eqref{eq:ot:problem} with respect to the cost matrix $C$ (which is parameterized by both a data batch and a generated batch of samples). Using Fenchel duality, we can show that the optimal solution $X^\star$ is a gradient (or Clarke subgradient) of the loss, regardless of the regularizer. Hence, one can use RDROT for both the forward and backward pass, and do not need to lean on memory-expensive unrolling schemes or computationally expensive implicit differentiation techniques. We include further details on the derivation in the supplementary material. To facilitate using RDROT for DL, we wrapped our GPU OT solvers as PyTorch and TensorFlow extensions. It features fast automatic differentiation and is hence readily used together with any neural architecture that parameterizes a cost matrix.

\section{Numerical experiments}
\subsection*{GPU performance of RDROT with quadratic regularization}
To illustrate the numerical advantages of our algorithm, we compared our GPU implementation of RDROT for quadratically regularized OT (which we will refer to as QDROT) with a
dual ascent method (L-BFGS), proposed in \cite{blondel2018smooth}, which we implemented with PyTorch with all tensors loaded on the GPU. Although there are several other solvers for quadratic OT, such as the semi-dual method proposed in \cite{blondel2018smooth}, and the semi-smooth Newton method from \cite{lorenz2021quadratically}, we have omitted them from our experiments since their associated gradient/search direction computations scale poorly with data size and are difficult to parallelize. In our benchmark, we simulated 50 data sets of size $1000 \times 1000$ and $2000 \times 3000$, respectively. We generated cost matrices with $C_{ij} = \frac{1}{2} \Vert \mathbf{x}_{i} - \mathbf{x}_j \Vert_2^2$, where $\mathbf{x}_i$, $\mathbf{x}_j$ are simulated 2D Gaussian variables with random parameters specific for each data set. We used the quadratic regularization term $\frac{(m+n) \alpha}{2} \Vert X \Vert_F^2$ with $\alpha$ ranging between $0.0005$ to $0.2$. For QDROT, $\alpha=0$ was added for reference. For each dataset and regularization, we ran the algorithms until an accuracy level of $0.0001$ was reached. The results are displayed in Figure~\ref{fig:quad:ot:benchmark}, in which it is clear that our method better exploits the parallelization from the GPU, as it consistently leads to a speedup of at least one order of magnitude compared to the L-BFGS method. Further details and additional experiments with other dataset sizes are included in the supplementary material.  
\begin{figure}[H]
    \centering
    \begin{subfigure}[b]{0.45\textwidth}\label{fig:quad1000}
        \centering
        {\includegraphics[width=1.\textwidth]{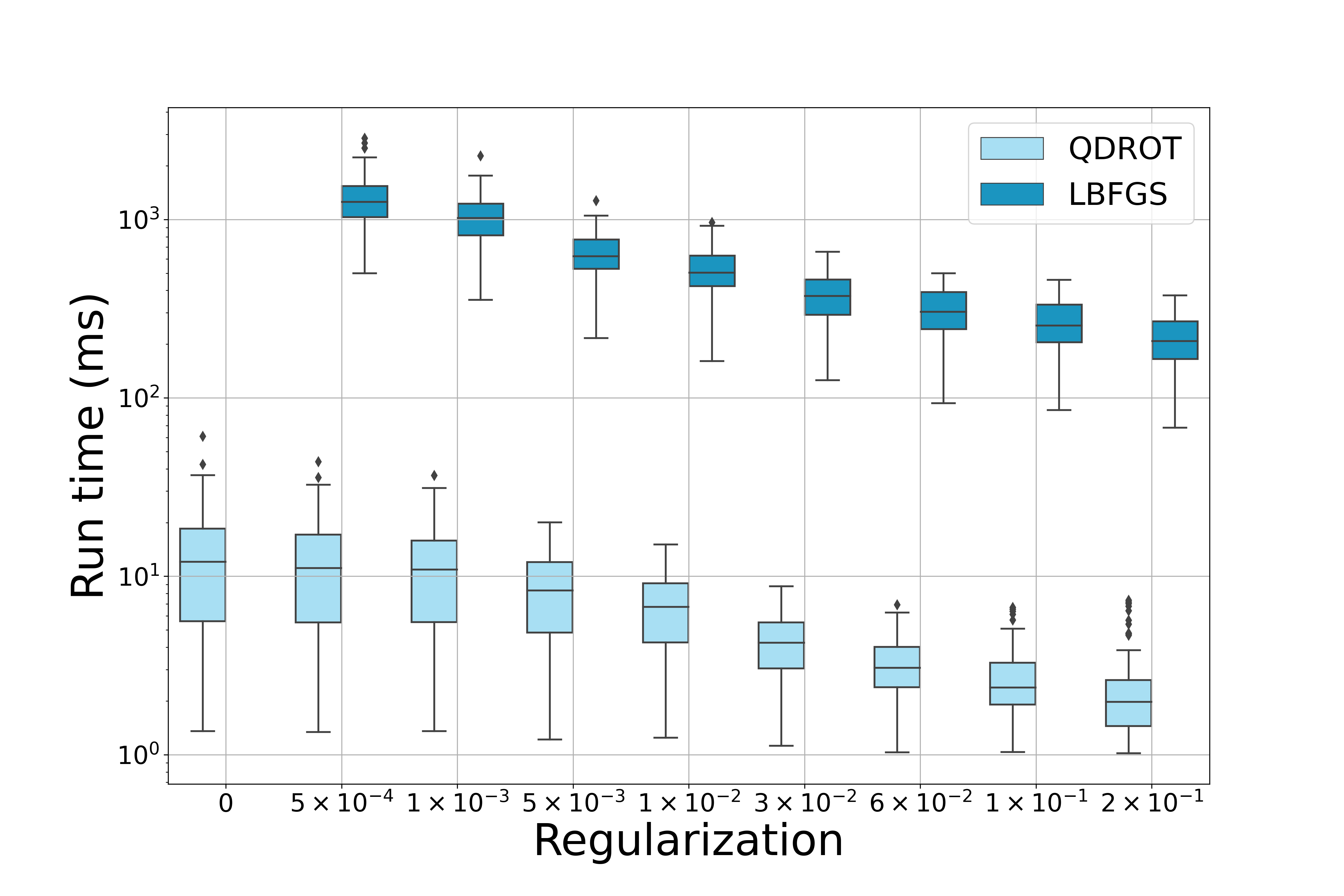}}
        \caption{$m=1000$, $n=1000$}
    \end{subfigure}
    \hskip-0.22in
    \begin{subfigure}[b]{0.45\textwidth}\label{fig:quad3000}
        \centering
        {\includegraphics[width=1.\textwidth]{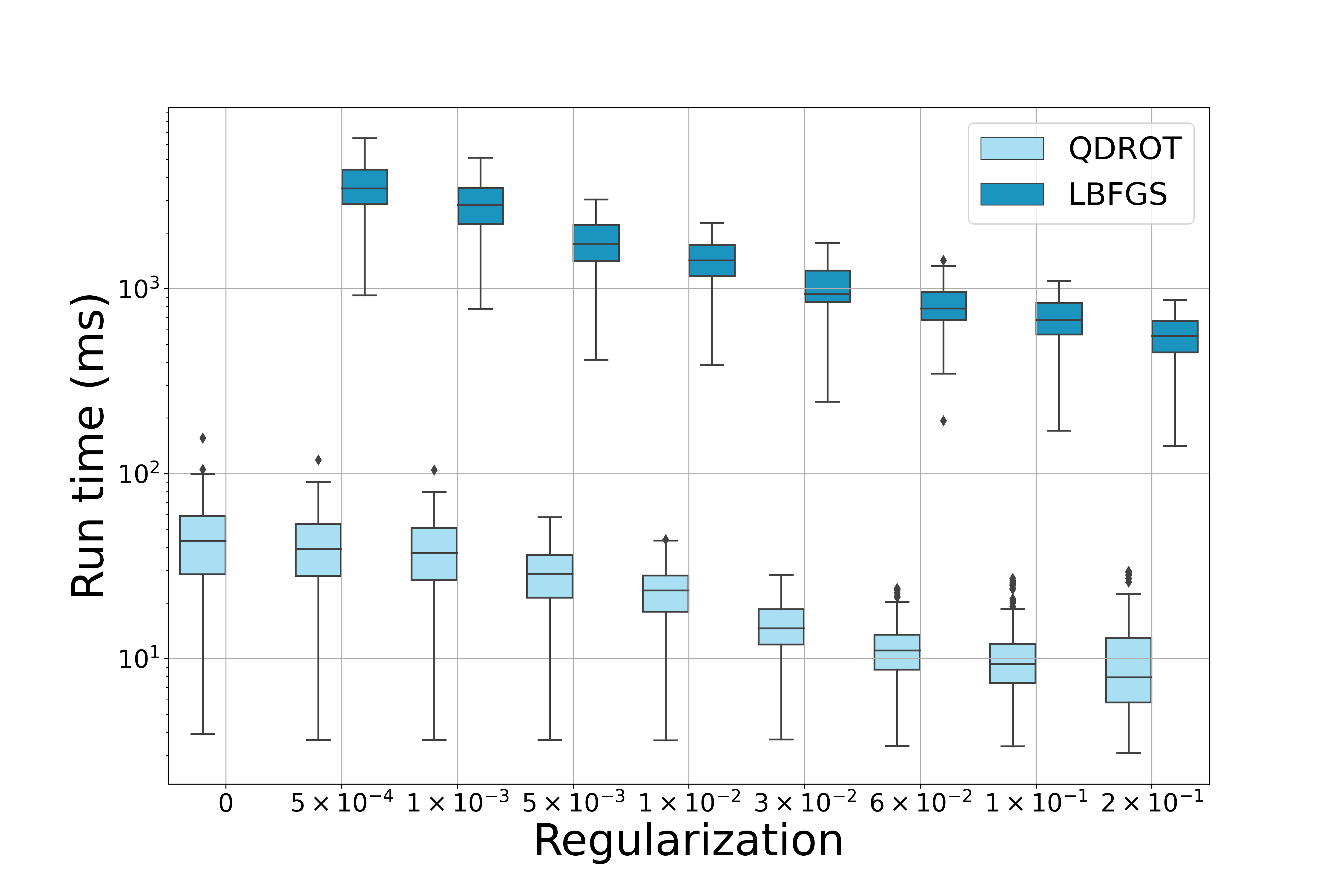}}
        \caption{$m=2000$, $n=3000$}
    \end{subfigure}
    \caption{Comparison between the RDROT (QDROT) and an L-BFGS method applied to the dual for different quadratic regularization parameters. 50 datasets of two different sizes were simulated.}
    \label{fig:quad:ot:benchmark}
\end{figure}

\subsection*{Group Lasso and Domain Adaptation}
OT is the backbone of many domain adaption techniques since it provides a natural framework to estimate correspondences between a training domain and a test domain. These correspondences can subsequently be used to align the training domain with the test domain to improve the test accuracy. Specifically, for a given training data set $\{(\mathbf{x}_s^i, y^i) \}_{i=1}^m$ and a test set $\{\mathbf{x}_t^j \}_{j=1}^n$, where $\{y^i\}_{i=1}^m$ are data labels, we define the cost matrix elements $C_{ij} = d(\mathbf{x}_s^i, \mathbf{x}_t^j)$ using a positive definite function $d$. By computing a transportation plan $X$ associated with \eqref{eq:ot:problem} and cost matrix $C$, one can adapt each data point in the training set into the target domain via $\mathbf{x}_{s\text{ adapt.}}^i = \frac{1}{p_i} \sum_{j=1}^m X_{ij} \mathbf{x}_t^j$. It has been shown that group-lasso regularization can improve the adaptation \cite{courty2016optimal} (cf. the discussion in~\S3). To handle the regularization, it is customary to linearize it and iteratively solve updated OT problems with Sinkhorn  \cite{ courty2014domain, courty2016optimal}. The current implementation group-lasso OT in the Python OT package \texttt{POT} uses this approach \cite{flamary2021pot}. Besides the computational cost of iteratively reconstructing transportation plans, entropic regularization tends to shrink the support of the adapted training data and hence underestimate the true spread in the target domain. This is thoroughly discussed in \cite{feydy2019interpolating}.
We illustrate this effect in Figure~\ref{fig:da:example} where we adapt data sets with different regularizers. 
\begin{figure*}[htb]
    \centering
    \begin{minipage}{0.27\textwidth}
        \centering
        {\includegraphics[width=1.\textwidth]{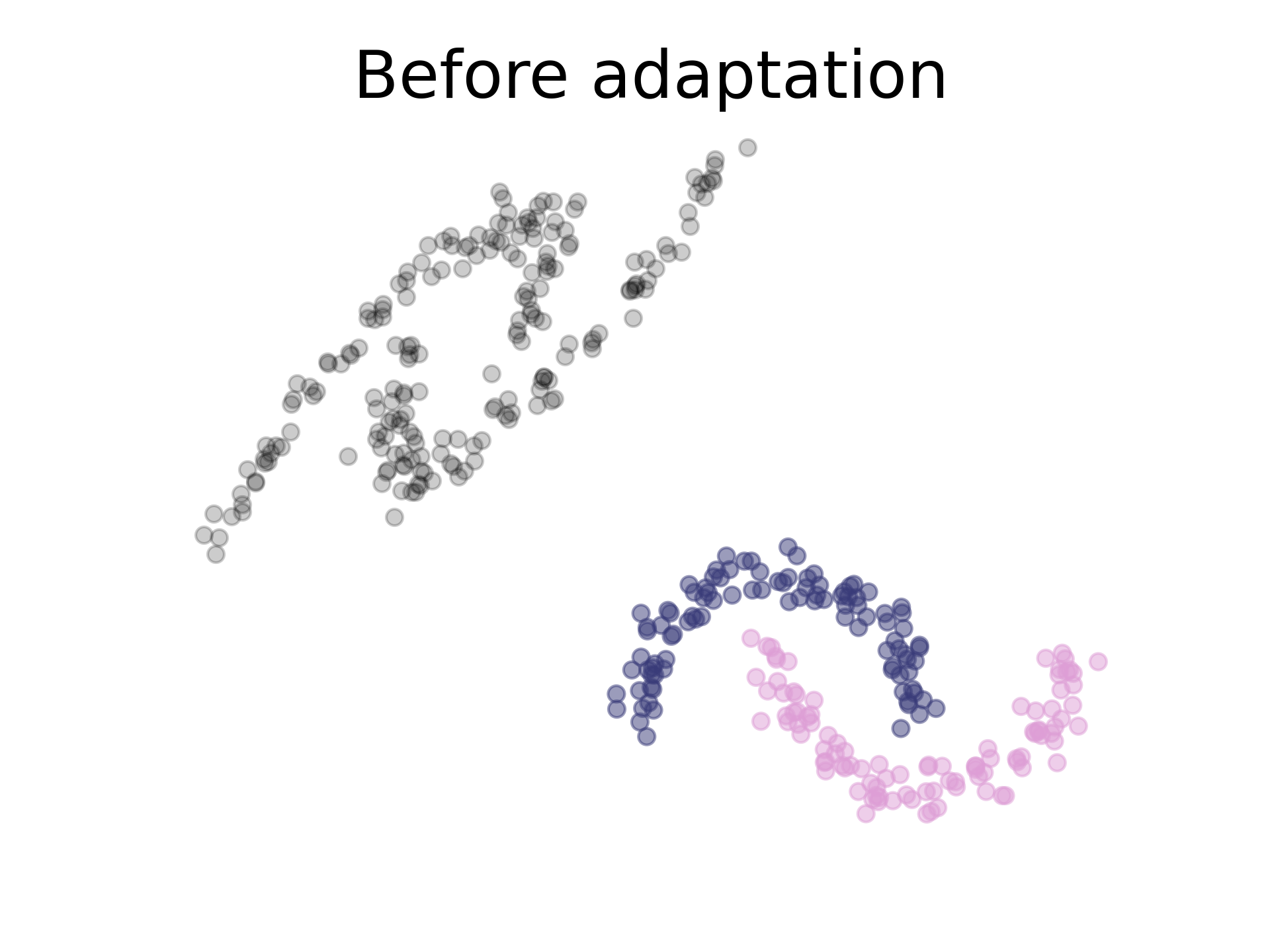}}
    \end{minipage}
    \hskip-0.22in
    \begin{minipage}{0.27\textwidth}
        \centering
        {\includegraphics[width=1.\textwidth]{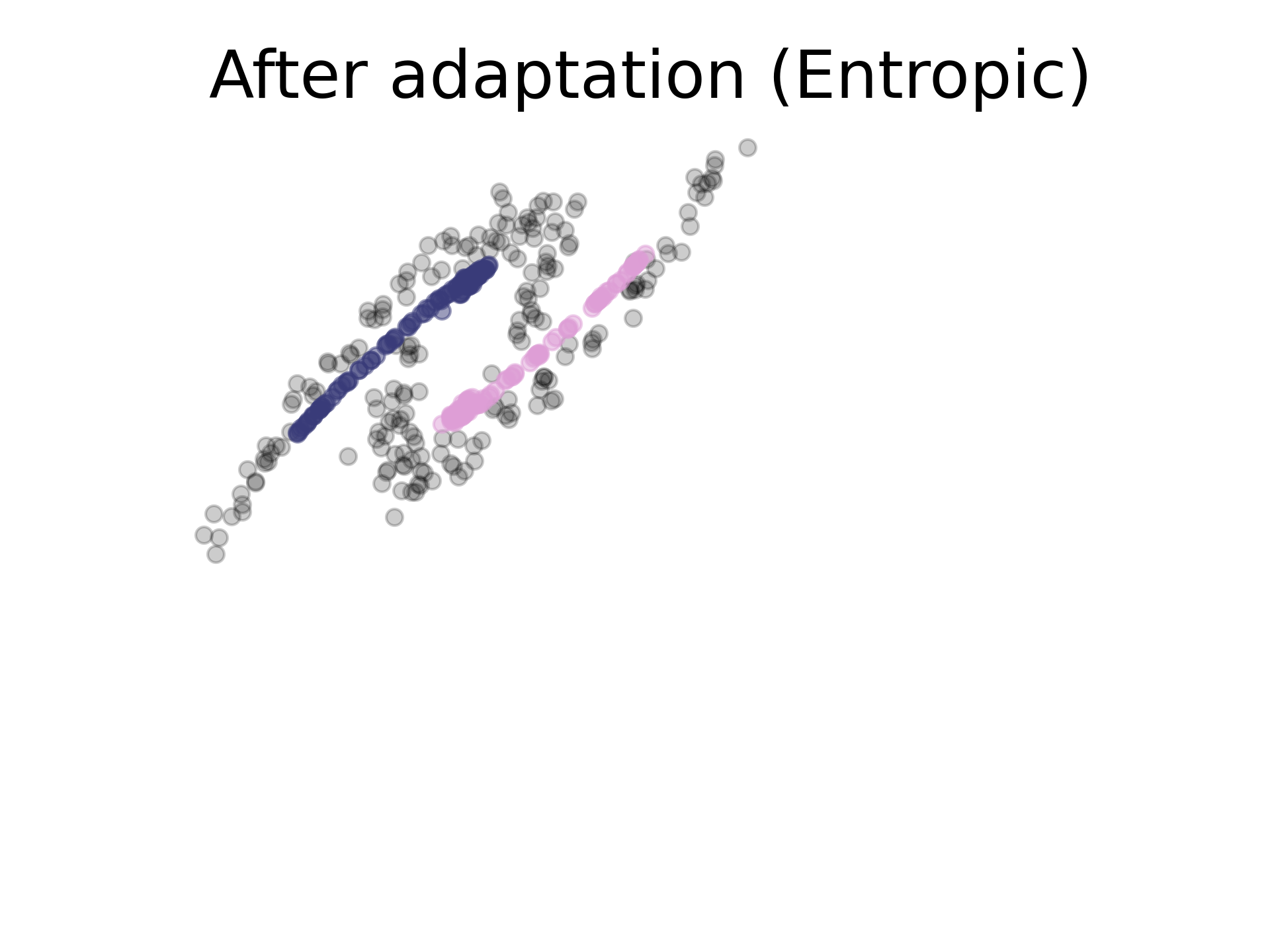}}
    \end{minipage}
    \hskip-0.25in
    \begin{minipage}{0.27\textwidth}
        \centering
        {\includegraphics[width=1.\textwidth]{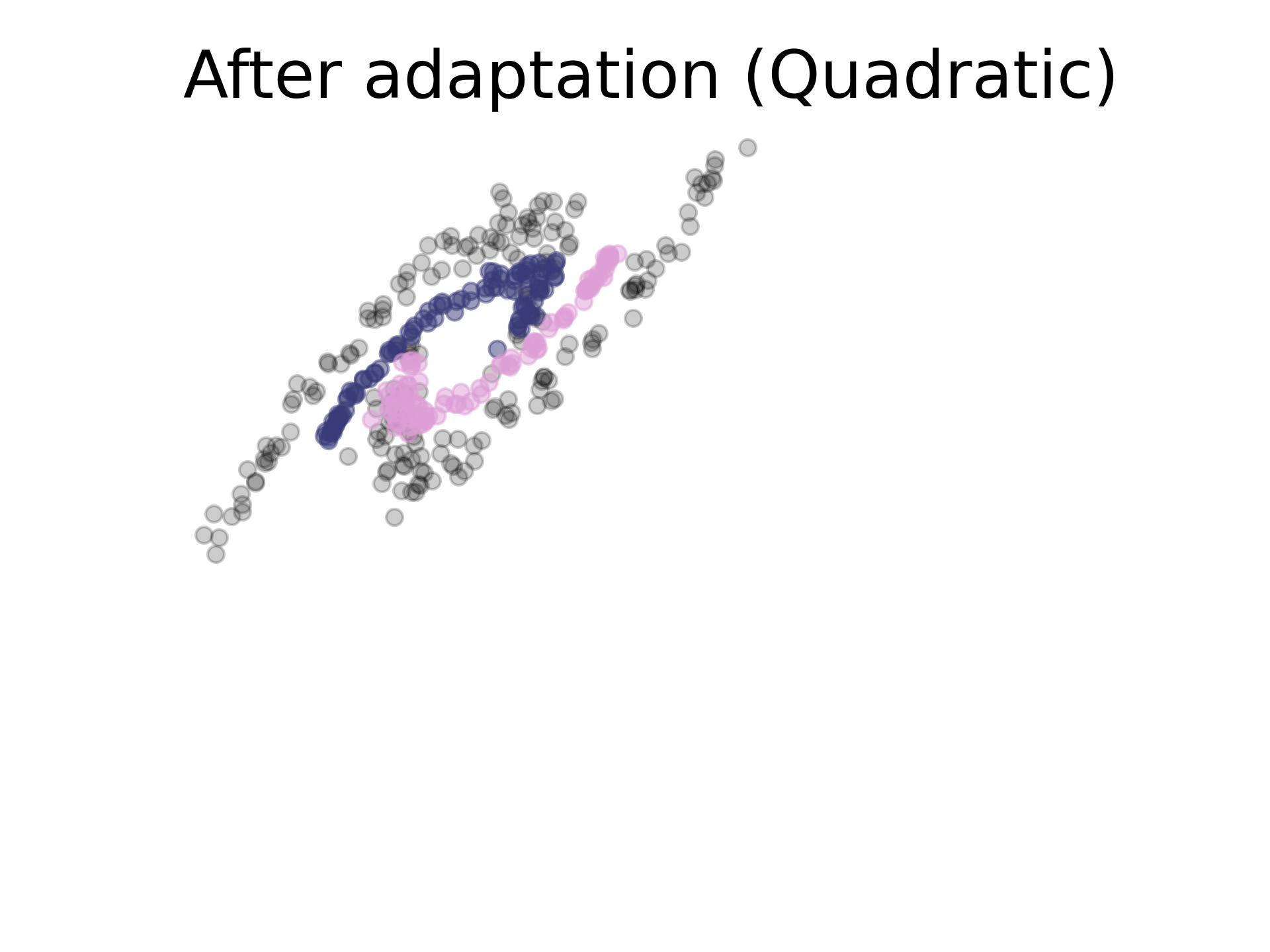}}
    \end{minipage}
    \hskip-0.22in
    \begin{minipage}{0.27\textwidth}
        \centering
        {\includegraphics[width=1.\textwidth]{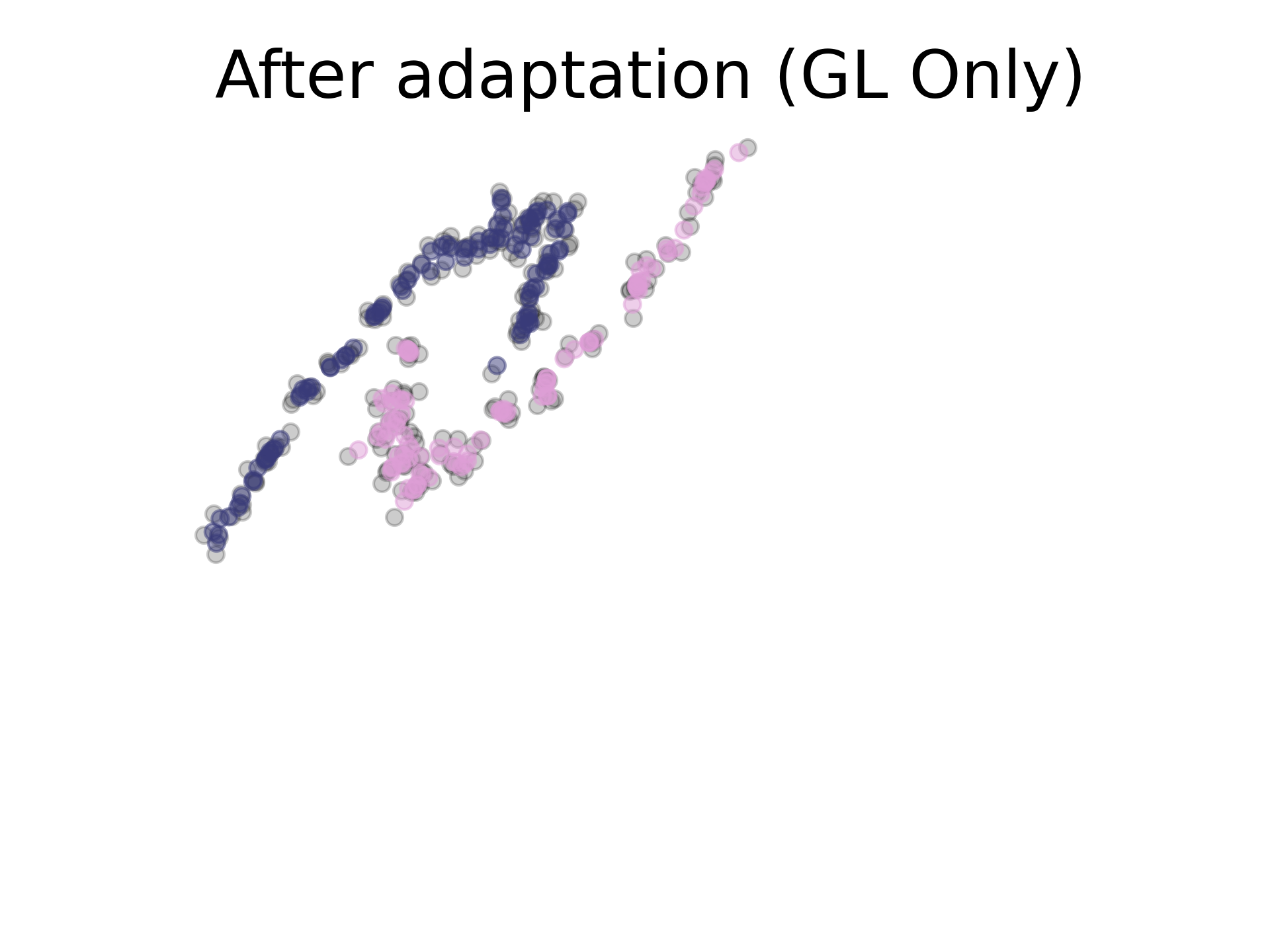}}
    \end{minipage}
      \caption{A toy example illustrating the effect strongly convex regularizations have on the support of the resulting adapted domains. Pink and purple points mark the labeled training set and the gray points the test domain. Notice how the Sinkhorn-based method, as well as the Quadratic Regularized method transfers the source domain samples towards the center of the target domain, while our approach transfers the samples in a way that matches the true variation of the test data }\label{fig:da:example}
\end{figure*}
Of course, the shrinkage can partially be addressed by decreasing the regularization parameter, but this tends to slow down that algorithm considerably and may even lead to numerical instability.
To shed some light on this trade-off, we simulated $50$ datasets with two features, 1500 training samples, and 1000 test samples. The transformed domain of the test set is achieved by a random affine transformation of the source domain. Each set had 2 unique labels that were uniformly distributed among the instances. The labels in the test set were only used for validation. The cost $C_{ij} = \frac{1}{2} \Vert \mathbf{x}_s^i-\mathbf{x}_t^j \Vert^2$ was used, which we normalized so that $\Vert C \Vert_ \infty = 1$. To compare the performance of the algorithms, we computed the time taken to reach a tolerance of $10^{-4}$. To assess the quality of the adaptation, we compute the Wasserstein distance between the adapted samples of each label and the corresponding samples of the test set. This means that the better the alignment, the lower the aggregated distances. For the method using entropic regularization, we varied the regularization parameters $0.001$ to $10$, and the group lasso regularization was set to $0.001$. The results are presented in Table~\ref{tab:da:benchmark}, in which it is clear that our method consistently outperforms the alternative methods, both in terms of adaptation quality, but also time to reach convergence. Additional experiments with other problem sizes and regularization are added to the supplementary material, and the results are consistent with Table~\ref{tab:da:benchmark}.
\begin{table}[htbp]
  \centering
  \caption{Performance benchmark of RDROT for Group-Lasso regularization (GLDROT) against the conjugate-gradient based Sinkhorn algorithm. Median and $10$th and $90$th percentile statistics are included. Our method is  competitive in terms of both speed and adaption quality.}
  \label{tab:da:benchmark}
    \begin{tabular}{l|c c| c c c | c c c}
     \hline
      & \multicolumn{2}{c|}{Reg.}  &\multicolumn{3}{c|}{Runtime (s) $\downarrow$}    & \multicolumn{3}{c}{Agg. $W_2$ dist. $\downarrow$}  \\
      Method & Ent & GL & Median  & $q10$ & $q90$ & Median  & $q10$ & $q90$ \\
    \hline
    GLSK \cite{courty2017joint} & \texttt{1e-3} & \texttt{1e-3} &3.77 & 3.68 & 8.90 & 0.311 & 0.0657 & 6.48 \\
    GLSK \cite{courty2017joint} & \texttt{1e-1}   &\texttt{1e-3} & 3.73 & 3.71 & 3.77 & 8.24 & 3.47 & 31.6 \\
    GLSK \cite{courty2017joint} & \texttt{1e+2}  & \texttt{1e-3}&1.86 & 1.86 & 1.88 & 52.8 & 10.9 & 311 \\ \hline 
    GLDROT (ours) & - & \texttt{1e-3} & 0.0619 & 0.0475 & 0.0771 & \textbf{0.0576} & 0.0262 & 0.182 \\
    GLDROT (ours) & - & \texttt{5e-3}& \textbf{0.0384} & 0.0306 & 0.0474 & 0.0879 & 0.0358 & 0.274 \\
    \hline
    \end{tabular}%
  \label{tab:addlabel}%
\end{table}%

\subsection*{Training of generative models}
As an illustration of the efficiency of GPU version of the algorithm, we use our PyTorch wrapper to implement the Minibatch Energy Distance~\cite{salimans2018improving}, which is used as the loss function of a Generative Adversarial Network. We performed experiments of image generation on MNIST and CIFAR10 datasets, and some of the generated samples are shown in Figure \ref{fig:gan:example}. The details of the experiments are included in the supplementary material. The resulting generator gives sharp images, and the training time was better or comparable with alternative frameworks, despite the low amount of regularization used.

\begin{figure*}[htb]
    \centering
    \begin{minipage}{0.4\textwidth}
        \centering
        {\includegraphics[trim=0cm 11.4cm 9.12cm 0cm,clip,width=1.\textwidth]{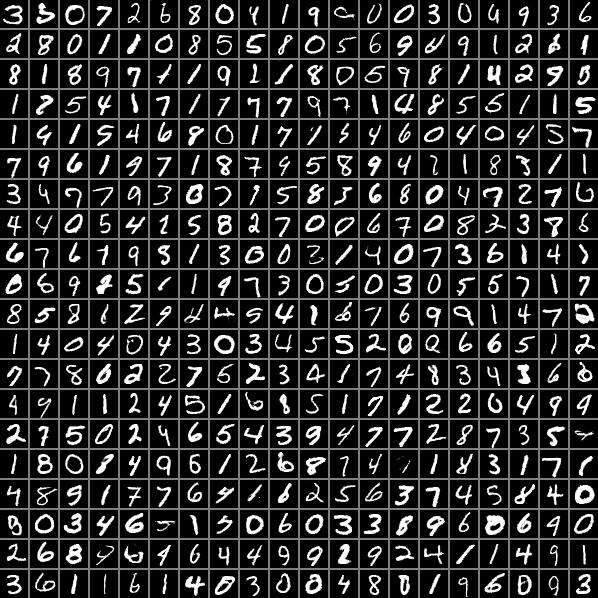}}
    \end{minipage}
    \hskip 0.42in
    \begin{minipage}{0.4\textwidth}
        \centering
        {\includegraphics[trim=2.55cm 6.00cm 7.724cm 6.86cm,clip,width=1.\textwidth]{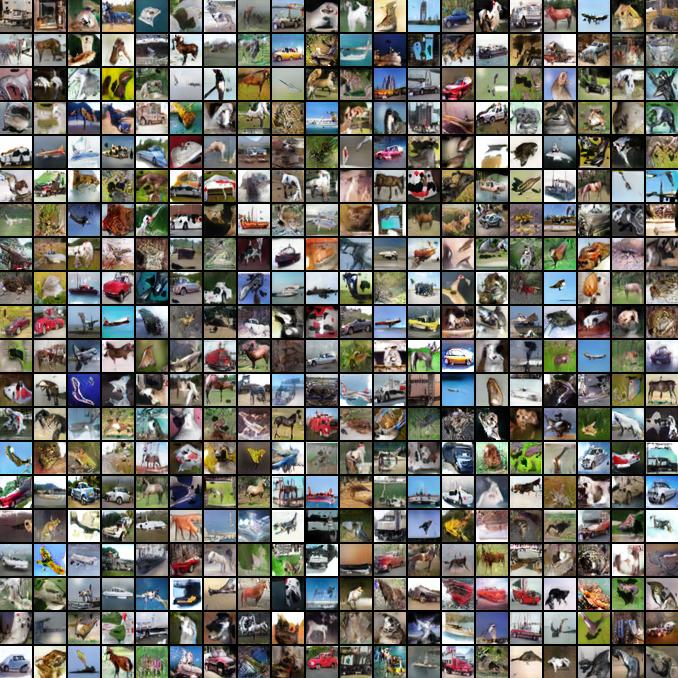}}
    \end{minipage}
    
      \caption{Generated samples by GANs trained on MNIST and CIFAR10 datasets with the Minibatch Energy Distance. The original sizes of samples are (28,28) and (32,32), respectively.}\label{fig:gan:example}
\end{figure*}

\section{Conclusions}

The ability to solve large regularized optimal transport problems quickly has the potential to improve the performance of a wide range of applications, including domain adaption, learning  of generative models, and more. In this paper, we have shown that the Douglas-Rachford splitting technique can handle a broad class of regularizers in an integrated and computationally efficient manner. 
Using a carefully chosen problem split, we derived an iterative algorithm that converges rapidly and reliably, and whose  operations are readily parallelizable on GPUs.
The performance of our algorithm is supported by both theoretical arguments and extensive empirical evaluations. In particular, the experiments demonstrate that the resulting algorithms can be up to two orders of magnitude faster than the current state-of-the-art. We believe that this line of research has the potential to make regularized OT numerically tractable for a range of tasks beyond the ones discussed in this paper.

\newpage
\bibliographystyle{plain}
\bibliography{references}

\begin{thebibliography}{10}

\bibitem{applegate2022faster}
David Applegate, Oliver Hinder, Haihao Lu, and Miles Lubin.
\newblock Faster first-order primal-dual methods for linear programming using
  restarts and sharpness.
\newblock {\em Mathematical Programming}, pages 1--52, 2022.

\bibitem{arjovsky2017wasserstein}
Martin Arjovsky, Soumith Chintala, and L{\'e}on Bottou.
\newblock Wasserstein generative adversarial networks.
\newblock In {\em International conference on machine learning}, pages
  214--223. PMLR, 2017.

\bibitem{bauschke2011convex}
Heinz~H Bauschke, Patrick~L Combettes, et~al.
\newblock {\em Convex analysis and monotone operator theory in Hilbert spaces},
  volume 408.
\newblock Springer, 2011.

\bibitem{blondel2018smooth}
Mathieu Blondel, Vivien Seguy, and Antoine Rolet.
\newblock Smooth and sparse optimal transport.
\newblock In {\em International conference on artificial intelligence and
  statistics}, pages 880--889. PMLR, 2018.

\bibitem{burke1988identification}
James~V Burke and Jorge~J Mor{\'e}.
\newblock On the identification of active constraints.
\newblock {\em SIAM Journal on Numerical Analysis}, 25(5):1197--1211, 1988.

\bibitem{chambolle2022accelerated}
Antonin Chambolle and Juan~Pablo Contreras.
\newblock Accelerated bregman primal-dual methods applied to optimal transport
  and wasserstein barycenter problems.
\newblock {\em SIAM Journal on Mathematics of Data Science}, 4(4):1369--1395,
  2022.

\bibitem{courty2017joint}
Nicolas Courty, R{\'e}mi Flamary, Amaury Habrard, and Alain Rakotomamonjy.
\newblock Joint distribution optimal transportation for domain adaptation.
\newblock {\em Advances in Neural Information Processing Systems}, 30, 2017.

\bibitem{courty2014domain}
Nicolas Courty, R{\'e}mi Flamary, and Devis Tuia.
\newblock Domain adaptation with regularized optimal transport.
\newblock In {\em Joint European Conference on Machine Learning and Knowledge
  Discovery in Databases}, pages 274--289. Springer, 2014.

\bibitem{courty2016optimal}
Nicolas Courty, R{\'e}mi Flamary, Devis Tuia, and Alain Rakotomamonjy.
\newblock Optimal transport for domain adaptation.
\newblock {\em IEEE Transactions on Pattern Analysis and Machine Intelligence},
  39(9):1853--1865, 2016.

\bibitem{cuturi2013sinkhorn}
Marco Cuturi.
\newblock Sinkhorn distances: Lightspeed computation of optimal transport.
\newblock {\em Advances in neural information processing systems}, 26, 2013.

\bibitem{NEURIPS2019_d8c24ca8}
Marco Cuturi, Olivier Teboul, and Jean-Philippe Vert.
\newblock Differentiable ranking and sorting using optimal transport.
\newblock In H.~Wallach, H.~Larochelle, A.~Beygelzimer, F.~d\textquotesingle
  Alch\'{e}-Buc, E.~Fox, and R.~Garnett, editors, {\em Advances in Neural
  Information Processing Systems}, volume~32. Curran Associates, Inc., 2019.

\bibitem{dauphin2017language}
Yann~N Dauphin, Angela Fan, Michael Auli, and David Grangier.
\newblock Language modeling with gated convolutional networks.
\newblock In {\em International conference on machine learning}, pages
  933--941. PMLR, 2017.

\bibitem{di2020optimal}
Simone Di~Marino and Augusto Gerolin.
\newblock Optimal transport losses and {S}inkhorn algorithm with general convex
  regularization.
\newblock {\em arXiv preprint arXiv:2007.00976}, 2020.

\bibitem{feydy2019interpolating}
Jean Feydy, Thibault S{\'e}journ{\'e}, Fran{\c{c}}ois-Xavier Vialard, Shun-ichi
  Amari, Alain Trouv{\'e}, and Gabriel Peyr{\'e}.
\newblock Interpolating between optimal transport and mmd using {S}inkhorn
  divergences.
\newblock In {\em The 22nd International Conference on Artificial Intelligence
  and Statistics}, pages 2681--2690. PMLR, 2019.

\bibitem{flamary2021pot}
R{\'e}mi Flamary, Nicolas Courty, Alexandre Gramfort, Mokhtar~Z. Alaya,
  Aur{\'e}lie Boisbunon, Stanislas Chambon, Laetitia Chapel, Adrien Corenflos,
  Kilian Fatras, Nemo Fournier, L{\'e}o Gautheron, Nathalie~T.H. Gayraud,
  Hicham Janati, Alain Rakotomamonjy, Ievgen Redko, Antoine Rolet, Antony
  Schutz, Vivien Seguy, Danica~J. Sutherland, Romain Tavenard, Alexander Tong,
  and Titouan Vayer.
\newblock Pot: Python optimal transport.
\newblock {\em Journal of Machine Learning Research}, 22(78):1--8, 2021.

\bibitem{genevay2018learning}
Aude Genevay, Gabriel Peyr{\'e}, and Marco Cuturi.
\newblock Learning generative models with {S}inkhorn divergences.
\newblock In {\em International Conference on Artificial Intelligence and
  Statistics}, pages 1608--1617. PMLR, 2018.

\bibitem{giselsson2016linear}
Pontus Giselsson and Stephen Boyd.
\newblock Linear convergence and metric selection for {D}ouglas-{R}achford
  splitting and {ADMM}.
\newblock {\em IEEE Transactions on Automatic Control}, 62(2):532--544, 2016.

\bibitem{gu2022keypoint}
Xiang Gu, Yucheng Yang, Wei Zeng, Jian Sun, and Zongben Xu.
\newblock Keypoint-guided optimal transport with applications in heterogeneous
  domain adaptation.
\newblock {\em Advances in Neural Information Processing Systems},
  35:14972--14985, 2022.

\bibitem{hare2004identifying}
Warren~L Hare and Adrian~S Lewis.
\newblock Identifying active constraints via partial smoothness and
  prox-regularity.
\newblock {\em Journal of Convex Analysis}, 11(2):251--266, 2004.

\bibitem{he20121}
Bingsheng He and Xiaoming Yuan.
\newblock On the {O}(1/n) convergence rate of the {D}ouglas--{R}achford
  alternating direction method.
\newblock {\em SIAM Journal on Numerical Analysis}, 50(2):700--709, 2012.

\bibitem{kingma2014adam}
Diederik~P Kingma and Jimmy Ba.
\newblock Adam: A method for stochastic optimization.
\newblock {\em arXiv preprint arXiv:1412.6980}, 2014.

\bibitem{liang2017local}
Jingwei Liang, Jalal Fadili, and Gabriel Peyr{\'e}.
\newblock Local convergence properties of {D}ouglas--{R}achford and alternating
  direction method of multipliers.
\newblock {\em Journal of Optimization Theory and Applications},
  172(3):874--913, 2017.

\bibitem{lin2019efficient}
Tianyi Lin, Nhat Ho, and Michael Jordan.
\newblock On efficient optimal transport: An analysis of greedy and accelerated
  mirror descent algorithms.
\newblock In {\em International Conference on Machine Learning}, pages
  3982--3991. PMLR, 2019.

\bibitem{liu2022sparsity}
Tianlin Liu, Joan Puigcerver, and Mathieu Blondel.
\newblock Sparsity-constrained optimal transport.
\newblock {\em International Conference on Learning Representations}, 2022.

\bibitem{lorenz2021quadratically}
Dirk~A Lorenz, Paul Manns, and Christian Meyer.
\newblock Quadratically regularized optimal transport.
\newblock {\em Applied Mathematics \& Optimization}, 83(3):1919--1949, 2021.

\bibitem{mai2021fast}
Vien~V Mai, Jacob Lindb{\"a}ck, and Mikael Johansson.
\newblock A fast and accurate splitting method for optimal transport: Analysis
  and implementation.
\newblock {\em International Conference on Learning Representations}, 2022.

\bibitem{peng2022optimal}
Hanyu Peng, Mingming Sun, and Ping Li.
\newblock Optimal transport for long-tailed recognition with learnable cost
  matrix.
\newblock 2022.

\bibitem{peyre2019computational}
Gabriel Peyr{\'e}, Marco Cuturi, et~al.
\newblock Computational optimal transport: With applications to data science.
\newblock {\em Foundations and Trends{\textregistered} in Machine Learning},
  11(5-6):355--607, 2019.

\bibitem{poon2019trajectory}
Clarice Poon and Jingwei Liang.
\newblock Trajectory of alternating direction method of multipliers and
  adaptive acceleration.
\newblock {\em Advances in Neural Information Processing Systems}, 32, 2019.

\bibitem{roberts2017gini}
Lucas Roberts, Leo Razoumov, Lin Su, and Yuyang Wang.
\newblock Gini-regularized optimal transport with an application to
  spatio-temporal forecasting.
\newblock {\em arXiv preprint arXiv:1712.02512}, 2017.

\bibitem{rockafellar2009variational}
R~Tyrrell Rockafellar and Roger J-B Wets.
\newblock {\em Variational analysis}, volume 317.
\newblock Springer Science \& Business Media, 2009.

\bibitem{salimans2016weight}
Tim Salimans and Durk~P Kingma.
\newblock Weight normalization: A simple reparameterization to accelerate
  training of deep neural networks.
\newblock {\em Advances in neural information processing systems}, 29, 2016.

\bibitem{salimans2018improving}
Tim Salimans, Han Zhang, Alec Radford, and Dimitris Metaxas.
\newblock Improving gans using optimal transport.
\newblock {\em International Conference on Learning Representations}, 2018.

\bibitem{schmitzer2019stabilized}
Bernhard Schmitzer.
\newblock Stabilized sparse scaling algorithms for entropy regularized
  transport problems.
\newblock {\em SIAM Journal on Scientific Computing}, 41(3):A1443--A1481, 2019.

\bibitem{seguy2017large}
Vivien Seguy, Bharath~Bhushan Damodaran, R{\'e}mi Flamary, Nicolas Courty,
  Antoine Rolet, and Mathieu Blondel.
\newblock Large-scale optimal transport and mapping estimation.
\newblock {\em International Conference on Learning Representations}, 2018.

\bibitem{shang2016understanding}
Wenling Shang, Kihyuk Sohn, Diogo Almeida, and Honglak Lee.
\newblock Understanding and improving convolutional neural networks via
  concatenated rectified linear units.
\newblock In {\em international conference on machine learning}, pages
  2217--2225. PMLR, 2016.

\bibitem{sinkhorn1967concerning}
Richard Sinkhorn and Paul Knopp.
\newblock Concerning nonnegative matrices and doubly stochastic matrices.
\newblock {\em Pacific Journal of Mathematics}, 21(2):343--348, 1967.

\bibitem{thornton2023rethinking}
James Thornton and Marco Cuturi.
\newblock Rethinking initialization of the {S}inkhorn algorithm.
\newblock In {\em International Conference on Artificial Intelligence and
  Statistics}, pages 8682--8698. PMLR, 2023.

\bibitem{wang2017new}
Sinong Wang and Ness Shroff.
\newblock A new alternating direction method for linear programming.
\newblock {\em Advances in Neural Information Processing Systems}, 30, 2017.

\bibitem{yu2013decomposing}
Yao-Liang Yu.
\newblock On decomposing the proximal map.
\newblock {\em Advances in neural information processing systems}, 26, 2013.

\end{thebibliography}



\newpage
\appendix

\section{Theoretical results}
\subsection{Proof of Lemma 3.1}
To prove Lemma 3.1, the following result is helpful:
\begin{lemma}\label{lemma:sparsity:preserving}
    Given that $h$ is sparsity promoting and $\stepsize>0$, then $X_{ij} = 0$ implies that $(\prox{\rho h}{X})_{ij} = 0$. Moreover, if $X_{ij} > 0$, then $(\prox{\rho h}{X})_{ij} \geq 0$.
\end{lemma}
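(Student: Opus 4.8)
The plan is to argue directly from the variational definition of the proximal operator. Write $Z^\star = \prox{\rho h}{X}$ as the unique minimizer of $\Phi(Z) := h(Z) + \frac{1}{2\rho}\Vert Z - X\Vert_F^2$; uniqueness holds because $h$ is closed, proper, and convex while the quadratic term is $\tfrac{1}{\rho}$-strongly convex, so $\Phi$ is strongly convex and its minimizer is attained and unique. Both claims then follow from a single perturbation: given $Z^\star$, let $\tilde Z$ agree with $Z^\star$ in every entry except $\tilde Z_{ij} = 0$. Since $\tilde Z$ is obtained from $Z^\star$ by zeroing a single entry, it is an admissible $X_s$ in Definition~\ref{def:h:sparsity}, so $h(\tilde Z) \le h(Z^\star)$. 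The whole argument reduces to showing that zeroing the $(i,j)$ entry also does not increase (and in the relevant cases strictly decreases) the quadratic term, which by uniqueness forces $Z^\star_{ij}$ to take the claimed value.

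First I would treat the case $X_{ij} = 0$. Because $X_{ij}=0$, comparing $\tilde Z$ and $Z^\star$ only affects the $(i,j)$ term of the squared norm, giving $\Vert \tilde Z - X\Vert_F^2 = \Vert Z^\star - X\Vert_F^2 - (Z^\star_{ij})^2$, hence $\Phi(\tilde Z) \le \Phi(Z^\star) - \frac{1}{2\rho}(Z^\star_{ij})^2$. If $Z^\star_{ij}\neq 0$ this strictly beats $\Phi(Z^\star)$, contradicting optimality; therefore $(\prox{\rho h}{X})_{ij} = Z^\star_{ij} = 0$.

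For the second claim I would assume $X_{ij} > 0$ and suppose, for contradiction, that $Z^\star_{ij} < 0$. Using the same $\tilde Z$ and the same bound $h(\tilde Z)\le h(Z^\star)$, I compute the change in the $(i,j)$ contribution to the quadratic term: $(Z^\star_{ij} - X_{ij})^2 - X_{ij}^2 = Z^\star_{ij}(Z^\star_{ij} - 2X_{ij})$. Since $Z^\star_{ij} < 0 < X_{ij}$, both factors are negative, so this quantity is strictly positive, meaning zeroing the entry strictly reduces the quadratic term. Thus $\Phi(\tilde Z) < \Phi(Z^\star)$, again contradicting optimality, and we conclude $Z^\star_{ij} \ge 0$.

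The proof is short, so there is no genuinely hard step; the only points requiring care are to invoke Definition~\ref{def:h:sparsity} correctly --- it permits setting an arbitrary subset of entries to zero, and in particular the single-entry modification $\tilde Z$ qualifies, which is exactly what legitimizes the perturbation --- and to confirm that $\Phi$ attains a unique minimizer so that the strict-decrease inequalities actually contradict optimality. The small algebraic identity $Z^\star_{ij}(Z^\star_{ij} - 2X_{ij}) > 0$ is the one place where the sign assumptions $Z^\star_{ij}<0$ and $X_{ij}>0$ enter, and it is worth writing out explicitly.
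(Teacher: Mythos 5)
Your proof is correct and follows essentially the same route as the paper's: both compare the prox output $Z^\star$ against the competitor obtained by zeroing its $(i,j)$ entry, use the sparsity-promoting property to get $h(\tilde Z)\le h(Z^\star)$, and then analyze the change in the quadratic term, with identical algebra in both cases. The only cosmetic difference is that you phrase the conclusion as a contradiction with the (unique) minimality of $Z^\star$, whereas the paper reads the same inequalities off directly from the definition of the proximal operator.
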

\begin{proof}
    Let $Y = \prox{\rho h}{X}$. 
    By the definition of the proximal operator, every $Z \in \R^{m \times n}$ satisfies
    $$h(Y) + \frac{1}{2\rho} \Vert Y-X \Vert_F^2 \leq h(Z) + \frac{1}{2\rho} \Vert Z-X \Vert_F^2$$
    or 
    $$h(Y) + \frac{1}{2\rho} \bigg(\Vert Y-X \Vert_F^2 - \Vert Z-X \Vert_F^2 \bigg) \leq h(Z). $$
    Moreover, since $h$ is sparsity promoting $h(Z)\leq h(Y)$, and it holds that
    \begin{align}\label{eq:prox:ineq}
        \frac{1}{2\rho} \bigg(\Vert Y-X \Vert_F^2 - \Vert Z-X \Vert_F^2 \bigg) \leq 0.
    \end{align}
 Let $Z$ be equal to $Y$ in all entries but the $(i,j)$th, which is set to zero.
If $X_{ij}=0$, then 
    $$\frac{1}{2\rho}\bigg(\Vert Y-X \Vert_F^2 -  \Vert Z-X \Vert_F^2 \bigg) =\frac{1}{2\rho} Y_{ij}^2$$ 
     and invoking \eqref{eq:prox:ineq} gives that $Y_{ij}=0$. If $X_{ij}> 0$, on the other hand, then

     $$\frac{1}{2\rho}\bigg(\Vert Y-X \Vert_F^2 -  \Vert Z-X \Vert_F^2 \bigg) =\frac{1}{2\rho}\bigg((Y_{ij}-X_{ij})^2 - X_{ij}^2 \bigg),$$
    and \eqref{eq:prox:ineq} gives
    $$ (Y_{ij}-X_{ij})^2 \leq X_{ij}^2$$
    which implies that $Y_{ij}\geq 0$. The proof is complete.
\end{proof}

Lemma~\ref{lemma:sparsity:preserving} and Theorem~1 in \cite{yu2013decomposing} allows to prove the composition lemma quite easily. We end this section with the proof.

\begin{proof}{\textbf{(Lemma 3.1)}}
    For $p(X) = \iota_{\R_+^{m \times n}}(X)$, 
   it holds that 
    \begin{align*}
     \prox{\rho p}{X} &= [X]_+
     \intertext{and}
     \partial p(X) &= N_{\R_+^{m \times n}}(X) \supseteq \{0\}
    \end{align*}
for all $X\geq 0$, where 
$N_{\R_+^{m \times n}}(X) = \{Y \in \R^{m\times n}: Y_{ij}=0 \text{ if } X_{ij}>0,\, Y_{ij}\leq 0 \text{ if } X_{ij}=0\}.$

By Lemma~\ref{lemma:sparsity:preserving}, if $X_{ij}=0$, then $(\prox{\stepsize h}{X})_{ij}=0$, and hence $(\partial p(X))_{ij} = (\partial p(\prox{\stepsize h}{X}))_{ij}$. Further, for positive entries $X_{ij}>0$, $(\partial p(X))_{ij} = \{0\}$, which implies that $(\partial p(\prox{\stepsize h}{X}))_{ij} \supseteq (\partial p(X))_{ij}$. Altogether, this means that 
$$ \partial p(\prox{\stepsize h}{X}) \supseteq \partial p(X).$$
Theorem 1 in \cite{yu2013decomposing} then gives
$$ \prox{\rho p + \rho h}{X} = \prox{\rho h}{\prox{\rho p}{X}} = \prox{\rho h}{[X]_+}$$
and
\begin{align*}
\prox{\rho f }{X} &= \prox{\rho \InP{C}{\cdot} +\rho p + \rho h}{X} \\
&= \prox{\rho p + \rho h}{X-\rho C} \\
&= \prox{\stepsize h}{[X-\stepsize C]_+}.
\end{align*}
\end{proof}
\subsection{On the non-degeneracy condition of Asssumption~1}
Assumption~1 is violated if $Y^\ast$ is a fixed point to equation (3) and $X^\star = \prox{\rho f}{Y^\star}$, but the regularity condition not fulfilled. That is, if
\begin{align*}
    \frac{1}{\stepsize}\bigg( Y^\ast - X^\ast \bigg) \in  \partial f(X^\star), \text{ while   }
    \frac{1}{\stepsize}\bigg( Y^\ast - X^\ast \bigg) \notin  \ri \partial f(X^\star). \nonumber
\end{align*}
We can write these conditions more conveniently as
\begin{align}
    \frac{1}{\stepsize}\bigg( Y^\ast - X^\ast \bigg) &\in  \partial f(X^\star) \setminus \ri \partial f(X^\star). \label{eqn:reg_conditions}
\end{align}

Let us now explore to what degree Assumption 1 can be seen as restrictive when it is invoked in Theorem~1. To this end, we also assume that $h$ is twice continuously differentiable for $X>0$.

Recall that $f = \InP{C}{X} + \iota_{\R_+^{m \times n}} + h(X)$ and note that the inclusion (\ref{eqn:reg_conditions}) cannot be satisfied if $X^\ast>0$, since $\partial f(X^\star)$ is a singleton. We therefore focus on the case when $\partial f$ is set-valued, i.e. when $X_{ij}^\ast=0$ for some $(i,j)$. Since $h(X)$ is differentiable for $X>0$, we only need to consider the zero entries. If $X_{ij}^\ast=0$, $\partial \iota_{\R_+^{m \times n}} = N_{\R_+^{m\times n}}$ implies that  $( \partial \iota_{\R_+^{m \times n}} \setminus \ri \partial \iota_{\R_+^{m \times n}})_{ij} = 0$ and thus that $ \partial \iota_{\R_+^{m \times n}} \setminus \ri \partial \iota_{\R_+^{m \times n}} = 0$. Consequentially, $\partial f(X^\star) \setminus \ri \partial f(X^\star) = C + \partial h(X^\ast) \setminus \ri \partial h(X^\ast)$, and 
\begin{align*}
    \frac{1}{\stepsize}\bigg( Y^\ast - X^\ast \bigg) = C + g, \text{ where } g \in \partial h(X^\ast) \setminus \ri \partial h(X^\ast).
\end{align*}
By the convergence of the algorithm, equation (5) in the main document implies that there exist two vectors $\phi^\star$ and $\varphi^\star$ such that $Y^\star - X^\star =\phi^\star \ones_n^\top + \ones_m {\varphi^\star}^\top$. This gives that the cost is on the form
$$ C = \rho^{-1}(\phi^\star \ones_n^\top + \ones_m {\varphi^\star}^\top) - g.$$
In other words, it is only cost matrices with a specific (and rather particular) structure that violate Assumption~1. For instance, if $h$ is smooth, e.g. $h=\lambda \Vert \cdot \Vert_F^2$, Assumption~1 is only violated when the cost can be written as a sum of two given rank-1 matrices. This is rarely met in real applications. 

\subsection{Proof of Theorem 1 and 2}
To prove the theorem we need the notion of partial smoothness: \\
\begin{definition}[Partial Smoothness] A lower semi-continuous function $f: \R^n \to \R$ is said to be partly smooth at $x^\star \in \R^n$ with respect to the $C^2$-manifold $\mc{M} \subset \R^n$ if: 
\begin{enumerate}[(i)]
    \item The restriction of $f$ to $\mc{M}$: $f |_\mc{M}$ is $C^2$ around $x^\star$,
    \item $\mc{T}_\mc{M}(x^\star) = (\mathrm{par}\, \partial f(x^\star) )^\perp $, where $(\mathrm{par}\, \partial f(x^\star) )^\perp$ denotes the orthogonal complement of the smallest subspace parallel to $\partial f(x^\star)$,
    \item $\partial f(x)$ is continuous at $x^\star$ relative to $\mc{M}$. 
\end{enumerate}
\end{definition}

Let $X^\star$ be a solution to the OT problem that fulfills $X_k \to X^\star$, and let
\begin{align}
    \mc{M}_1 = \{X \in \R_+^{m\times n}: X_{ij}=0, \text{ if } X_{ij}^\star=0,\, X_{ij}>0 \text{  if }X_{ij}^\star>0 \}, \quad
    \mc{M}_2 = \mc{X}.
\end{align}
Recall that $f(X) = \InP{C}{X} + \iota_{\R_+^{m \times n}}(X) + h(X)$, and $g(X) =  \iota_{\mathcal{X}}(X)$. Since $h$ is assumed to be twice continuously differentiable over $\R_{++}^{m\times n}$, by the conditions in the theorem, $f$ is partly smooth with respect to $\mc{M}_1$. Further, since $g$ is constant over $\mc{M}_2$, it is also partly smooth over $\mc{M}_2$. The optimality conditions of the regularized OT problem are
\begin{align}\label{eq:rewritten:opt:cond}
\begin{aligned}
    0 &\in  \partial f(X^\star) - \frac{1}{\stepsize}\bigg( Y^\ast - X^\ast \bigg) \\
    0 &\in  \partial g(X^\star)+  \frac{1}{\stepsize}\bigg( Y^\ast - X^\ast \bigg).
\end{aligned}
\end{align}
Moreover, since $\partial g = N_{\mathcal{X}}$, and $\mc{X}$ is an affine subspace, $\partial g = \ri \partial g$. Therefore, Assumpion~1 extends to the following the non-degeneracy condition:
\begin{align}\label{eq:relin:fg}
\begin{aligned}
    0 &\in  \ri \partial f(X^\star) - \frac{1}{\stepsize}\bigg( Y^\ast - X^\ast \bigg) \\
    0 &\in  \ri \partial g(X^\star)+  \frac{1}{\stepsize}\bigg( Y^\ast - X^\ast \bigg).
\end{aligned}
\end{align}
With the partial smoothness and the non-degeneracy condition established, we have all the  building blocks needed for proving Theorem 1 and 2.

\begin{proof}{(\textbf{Theorem 1 and 2})}
    By invoking the first order optimality condition associated with the first update in the DR-splitting scheme in (4), we get:
 \begin{align*}
    X_{k+1} = \prox{\stepsize f}{Y_k} \implies 0 \in \frac{1}{\rho}\bigg(X_{k+1}-Y_k\bigg) + \partial f(X_{k+1})
 \end{align*}
 or
  \begin{align}\label{eq:x:inclusion}
    \frac{1}{\rho}\bigg(Y_k-X_{k+1} \bigg) \in \partial f(X_{k+1}).
 \end{align}
  Similarly, the second update in (4) is associated with the inclusion:
\begin{align*}
    Z_{k+1} = \prox{\stepsize g}{2 X_{k+1} - Y_k} \implies 0 \in \frac{1}{\stepsize} \bigg(  Z_{k+1}-2X_{k+1} + Y_k \bigg) + \partial g(Z_{k+1})
 \end{align*}
 or equivalently
 \begin{align}\label{eq:z:inclusion}
    \frac{1}{\stepsize} \bigg(Y_k - X_{k+1} \bigg) - \frac{1}{\stepsize}\bigg(  Y_{k+1}-Y_{k} \bigg) \in g(Z_{k+1}).
 \end{align}
 where we used the third equation of the DR-update, i.e $Y_{k+1}-Y_k = Z_{k+1}-X_{k+1}$.
%
Notice how \eqref{eq:x:inclusion} and \eqref{eq:z:inclusion} relate closely to the optimality conditions  \eqref{eq:rewritten:opt:cond}.
Now consider the function $F(X, Z) = f(X) + g(Z) - \stepsize^{-1}\InP{X-Z}{Y^\ast - X^\ast}$. Since 
$X_{k+1} \in \R_+^{m \times n}$ and $Z_{k+1} \in \mc{X}$ for all $k \geq 1$, it holds that 
$F(X_{k+1}, Z_{k+1}) = \InP{C}{X_{k+1}} + h(X_{k+1}) - \stepsize^{-1}\InP{X_{k+1}-Z_{k+1}}{Y^\ast - X^\ast}$. Since $f+g$ are convex, closed, and attains a minimizer, Theorem~25.6 in \cite{bauschke2011convex} can be used to establish that $Y_k \to Y^\star$, when $k \to \infty$, where $Y^\star$ is a fixed point of the DR-update. In particular, 
$$\Vert Y_{k+1} -Y_k\Vert = \Vert (Y_{k+1} - Y^\star) -(Y_k - Y^\star )\Vert \leq \Vert Y_{k+1} - Y^\star \Vert + \Vert Y_{k} - Y^\star \Vert \to 0,$$ 
which can be used to deduce that the third term of $F$ tends to zero as $k\to \infty$: 
$$\vert \InP{X_{k+1}-Z_{k+1}}{Y^\ast - X^\ast}\vert \leq  \Vert Z_{k+1} - X_{k+1} \Vert \Vert Y^\ast - X^\ast \Vert = \Vert Y_{k+1} - Y_{k} \Vert \Vert Y^\ast - X^\ast \Vert \to 0.$$
Consequentially,
\begin{align}\label{eq:F:limit}
    F(X_{k+1}, Z_{k+1}) \to \InP{C}{X^\star} + h(X^\star) = f(X^\star) + g(X^\star).
\end{align}
As $h$ is assumed to be convex, proper, and lower semicontinuous, we have that $h$ is subdifferentially continuous in its domain (including at $X^\star$) \cite{rockafellar2009variational}. This gives
\begin{align*}
    \mathrm{dist}\bigg\{0,\, \partial_X F(X_{k+1},Z_{k+1})\bigg\} &= 
    \mathrm{dist}\bigg\{0,\, \partial f(X_{k+1}) - \frac{1}{\stepsize}\bigg( Y^\ast - X^\ast \bigg) \bigg\}\\ &\leq \stepsize^{-1} \Vert (Y_k - Y^\ast)- (X_{k+1}-X^\ast)  \Vert \\
    &\leq \stepsize^{-1}\Vert Y_k - Y^\ast \Vert + \stepsize^{-1}\Vert  X_{k+1}-X^\ast \Vert \\
    &\leq 2\stepsize^{-1}\Vert Y_k - Y^\ast \Vert \\
    &\to 0.
\end{align*}
Here, the first inequality follows from that $\mathrm{dist}\{x, S \} = \inf\{ \Vert x-y \Vert: y \in S\} \leq \Vert x-y \Vert,$ for all $y\in S$. The second is the Cauchy Schwarz inequality, and the third inequality follows from the non-expansiveness of proximal operators: 
$$\Vert  X_{k+1}-X^\ast \Vert = \Vert  \prox{\rho f}{Y_{k}}-\prox{\rho f}{Y^\ast} \Vert \leq \Vert Y_k - Y^\ast \Vert.$$ Similarly,
\begin{align*}
    \mathrm{dist}\bigg\{0,\, \partial_Z F(X_{k+1},Z_{k+1})\bigg\} &=
    \mathrm{dist}\bigg\{0,\, \partial g(X_{k+1}) + \frac{1}{\stepsize}\bigg( Y^\ast - X^\ast \bigg) \bigg\}  \\ &\leq \stepsize^{-1} \Vert (X_{k+1}-X^\ast) - (Y_k - Y^\ast)  - (Y_{k+1}-Y_{k}) \Vert \\
    &\leq 2\stepsize^{-1}\Vert Y_k - Y^\ast \Vert + \stepsize^{-1} \Vert Y_{k+1}-Y_{k} \Vert\\
    &\to 0.
\end{align*}
Hence, $\text{dist}\bigg\{0,\, \partial F(X_{k+1},Z_{k+1})\bigg\} \to 0$. This together with \eqref{eq:relin:fg} and \eqref{eq:F:limit}, and the fact that $f$ is partially smooth at $X^\star$ with respect to $\mc{M}_1$, we apply Theorem 5.3 in \cite{hare2004identifying} which proves the theorem. To prove Theorem 2, one can directly invoke Theorem 5.6 in \cite{liang2017local}.     
\end{proof}
\section{Initialization}
When using the initialization $X_0 = pq^\top$, $\phi_0 = \mathbf{0}_m$, and $\varphi_0 = \mathbf{0}_n$, and the default stepsize, the updates in (6) gives that 
\begin{align*}
    X_k &= \mathbf{0}_{m \times n} \\
    \phi_k &= (k+1)n^{-1}(p + (m+n)^{-1}) \\
    \varphi_k &=(k+1)m^{-1}(q + (m+n)^{-1})
\end{align*}
when $k = 1,\,2,\dots N$, where 
$$N=\min_{ij} \lceil C_{ij} mn (m+n)^{-1}(m p_i + n q_j + 1)^{-1} - 1 \rceil = O(\max(m,n)).$$ 
When the cost is normalized, we can use the rough approximation, $C_{ij} \sim 1$, $p_i \sim m^{-1}$, and $q_j \sim n^{-1}$, to get the simple initialization strategy: $X_0 = \mathbf{0}_{m \times n}$, $\phi_0 = (3(m+n))^{-1}(1 + m/(m+n)) \ones_m$, and $\varphi_0 = (3(m+n))^{-1}(1 + n/(m+n)) \ones_n$, that skips these first $N$ iterations.
\section{Derivation of Dual and Stopping criterion}\label{appendix:dual:stopping:criterion} 
A simple calculation reveals that $g(X)= \iota_{\mathcal{X}}(X)$ has the Fenchel conjugate
\begin{align*}
    g^*(U) = \begin{cases}
    \langle{U},{pq^\top}\rangle, & \text{if } U = \mu \ones_n^\top + \ones_m \nu^\top \\
    \infty, & \text{otherwise.}
    \end{cases}
\end{align*}
Hence, any feasible dual variable is on the form $U = \mu \ones_n^\top + \ones_m \nu^\top$, for which $g^*(U) = p^\top \mu + q^\top \nu$. Further, the conjugate of $f(X) =  \InP{C}{X} + \iota_{\R_+^{m \times n}}(X) + h(X)$ can be expressed as
\begin{align*}
    f^*(U) &= \sup_X \InP{U}{X} - f(X) \\
           &= \sup_{X\geq 0} \InP{U-C}{X} - h(X) \\
           &\leq  \sup_{X\geq 0} \InP{[U-C]_+}{X} - h(X) \\
           &\leq \sup_{X} \InP{[U-C]_+}{X} - h(X) \\
           &= h^*([U-C]_+).
\end{align*} 
Moreover, since $h$ is sparsity promoting,
\begin{align*}
    h^*([U-C]_+) &=  \sup_{X} \InP{[U-C]_+}{X} - h(X) \\
                 &\leq  \sup_{X} \InP{[U-C]_+}{[X]_+} - h([X]_+)\\
                 &=\sup_{X\geq 0} \InP{[U-C]_+}{X} - h(X) \\
                 &= f^*(U), 
\end{align*}
meaning that $f^*(U) = h^*([U-C]_+)$. This gives the dual problem
\begin{align}\label{eq:derived:dual}
    \maximize_{U = \mu \ones_n^\top + \ones_m \nu^\top} \,\,\,  -f^*(-U) - g^*(U) = -p^\top \mu - q^\top \nu - h^*([-\mu \ones_n^\top - \ones_m \nu^\top-C]_+).
\end{align}
To relate the iterates to the optimality condition, we define $X_{k+1} = \prox{\stepsize f}{Y_k}$, $Z_{k+1}=\prox{\stepsize g}{2 X_{k+1} - Y_k}$, and $U_{k} := (Y_{k-1}-X_k)/\stepsize$ and observe that
\begin{align*}
    \stepsize^{-1}(Z_{k+1}-X_{k+1}) = \stepsize^{-1}(X_{k+1} - Y_k - \phi_k \ones_n - \ones_m \varphi_k^\top) = -U_{k+1} -  (\stepsize^{-1}\phi_k) \ones_n - \ones_m (\stepsize^{-1}\varphi_k)^\top.
\end{align*}
By theorem 25.6 in \cite{bauschke2011convex}, $Z_{k+1}-X_{k+1} \to 0$, and $U_{k+1} \to U^\star$ where $U^\star$ is a solution to \eqref{eq:derived:dual}. Hence $ (\stepsize^{-1}\phi_k) \ones_n + \ones_m (\stepsize^{-1}\varphi_k)^\top \to U^*$, meaning that $\stepsize^{-1}\phi_k$ and $\stepsize^{-1}\varphi_k$ will correspond to $\mu$ and $\nu$ in \eqref{eq:derived:dual} respectively. Using that $r_k$ and $s_k$ in (6) are directly related to the primal residual, we can derive the following stopping criteria:
\begin{align}\label{eq:stopping:criteria}
    r_\mathrm{primal} : =& \max ( \Vert X_k\ones_n-p \Vert, \Vert X_k^\top \ones_m-q \Vert) = \max (\Vert r_k \Vert, \Vert s_k \Vert ), \nonumber\\
    \mathrm{gap} :=&  \InP{C}{X_k} + h(X_k) - \stepsize^{-1}(p^\top \phi_k + q^\top \varphi_k) + h^\ast(\stepsize^{-1}([\phi_k \ones_n^\top + \ones_m \varphi_k^\top - \stepsize C]_+) \nonumber\\
    =& \InP{C}{X_k} - \stepsize^{-1}(p^\top \phi_k + q^\top \varphi_k) + \Delta_h(X_k, \, \phi_k, \varphi_k), \nonumber
\end{align}
where $\Delta_h$ denotes the duality gap deviation compared to the unregularized problem. Further, notice that by letting $\bar{X}_{k+1} = [X_k + \phi_k \ones_n^\top + \ones_m \varphi_k^\top - \stepsize C]_+$, we can express $[\phi_k \ones_n^\top + \ones_m \varphi_k^\top - \stepsize C]_+ = [\bar{X}_{k+1} -X_{k}]_+$, which facilitates evaluating the duality gap. Notice that $h^\ast$ may not be finite. In such settings, the dual is maximized over its effective domain, resulting in the addition of dual constraints. To account for these, one must include a dual residual that measures the constraint violation. We include some examples of stopping criteria if the following table.

\begin{table*}[!ht]
    \centering
    \caption{Termination criteria for a selection of regularizers. Here we let $\Vert \cdot \Vert$ denote an arbitrary norm that is sparsity promoting.}
    \vskip 0.15in
    \begin{center}
    \begin{small}
    \begin{sc}
    \label{tab:term:crit}
    \begin{tabular}{c c c}
        \toprule
         $h$ & $\Delta$ & $r_\mathrm{dual}$ \\ \midrule
         $0$ &  $0$ & $\Vert {X}_{k+1} -X_{k}]_+ \Vert_F$ \\
         $\Vert X \Vert_F^2$ & $ \Vert [(1+\rho){X}_{k+1} -X_{k}]_+ \Vert_F^2$ & $0$ \\
         $ \Vert X \Vert$ & $0$ & $\Vert \prox{h}{[\bar{X}_{k+1} -X_{k}]_+} \Vert_F$ \\
         \bottomrule
    \end{tabular}
    \end{sc}
    \end{small}
    \end{center}
    \vskip -0.1in
    \end{table*}
    
\section{Backpropagation through regularized OT-costs}
We let $h_c = h + \iota_{\R_+^{m \times n}} + \iota_{\mc{X}}$, and define the OT-cost as follows: $\mathrm{OT}_h(C) = \inf_X \InP{C}{X} + h_c(X)$. Note that OT cost is equal to the optimal value of the OT problem (1), i.e. 
if $X^\star$ is a solution to (1), then $\mathrm{OT}_h(C) = \InP{C}{X^\star} + h_c(X^\star)$. The OT cost is also closely related to the Fenchel conjugate of $h_c$, as $\mathrm{OT}_h(C) = - \bar{h}^\ast(-C)$, and hence $\bar{h}^\ast(-C) + \bar{h}(X^\star) =  \InP{-C}{X^\star}$. Invoking subdifferential properties of the Fenchel conjugate gives that $X^\star \in \partial \bar{h}^\star(-C)$. Further
$$\partial_C( - \mathrm{OT}_h(C)) = \partial_C h_c^\ast(-C) \ni -X^\star.$$
In particular, whenever the solution is unique, such as when $h$ is strongly convex, then $\nabla_C \mathrm{OT}_h(C) = X^\star$. Therefore, using the transportation plan estimated via RDROT is a reasonable gradient estimate of the OT cost.
\section{GPU Implementation}

\subsection{GPU Architecture Basics}

Before introducing the \replaced{detailed design}{design details} of the GPU kernels, we recall some basic properties of \deleted{the} modern GPU \replaced{architectures}{architecture}. Since the kernel design is based on CUDA\added{,} which is compatible with NVIDIA GPUs, we \deleted{only} focus on the architecture of NVIDIA GPUs\replaced{.}{ here.}

\subsubsection{Thread Organization}

\replaced{The}{A Streaming Multiprocessor (SM) is a} fundamental unit of computation in NVIDIA GPUs\added{ is the Streaming Multiprocessor (SM)}. It is responsible for executing \added{the} parallel computations and managing \added{the} resources on the GPU. Each SM consists of multiple CUDA cores, \added{a} shared memory, registers, and other components that enable efficient parallel processing.

The parallel computations are organized in threads, grouped according to the following hierarchy:
\deleted{As for the organization of the parallel threads, it can be introduced from the lowest level to the highest level as follows.}
\begin{enumerate}
    \item The \emph{thread} is the basic unit of code. All threads execute the same code, but they are endowed with an ID that can be used to parameterize memory access and control decisions.
    \item \deleted{Block:} A \emph{block} is a group of threads that execute together on an SM in the GPU. Threads within a block can cooperate and communicate using shared memory.
    \item \deleted{Grid:} The \emph{grid} refers to the entire set of blocks that will be executed on the GPU. It represents the overall organization of parallel computation.
\end{enumerate}

The smallest unit of thread execution on an NVIDIA GPU is the \emph{warp}. It consists of 32 threads that are scheduled and executed together on a single SM. All threads within a warp execute the same instruction at the same time (but on different data, parameterized by the thread and block IDs). The reduction within the warp is highly optimized and efficient.

\subsubsection{Memory Hierarchy}

\replaced{NVIDIA GPUs, such as the NVIDIA Tesla V100, have}{
As for the memory hierarchy of NVIDIA GPUs, take NVIDIA Tesla V100 GPU as an example, there are} more than eight types of memory including global memory, shared memory, registers, texture memory, constant memory, etc. We only focus on the first three \replaced{memory types, since they are the most}{types of memory which are highly} relevant to the kernel design in the next section.

\begin{enumerate}
    \item On NVIDIA GPUs, the \emph{global memory} \deleted{Global memory: The \emph{global memory} of NVIDIA GPUs} refers to the main memory available on the GPU. It is a high-speed memory space used for storing data and instructions that are accessible by all the CUDA cores in the GPU. The global memory is the entry for exchanging data with the system RAM (random access memory). The global memory lies in the highest level of the memory hierarchy and has the largest capacity and the slowest \replaced{access}{accessing} speed\deleted{ among all types of memory}. NVIDIA Tesla V100 has 32GB \replaced{of}{as the} global memory. An important feature of global memory is coalesced memory access, which will be discussed below and used \deleted{further discussed} in Section \ref{sec:general_opt}.
    \item \deleted{Shared memory: }The \emph{shared memory}\deleted{ of NVIDIA GPUs} is a \deleted{fast and} low-latency memory space\replaced{,}{ that is} physically located on the GPU chip. The shared memory is used for data sharing within a block, \replaced{reducing memory access times}{which reduces the accessing} significantly compared with using the global memory. The access speed of the shared memory is roughly 100x faster than the uncached global memory. Other characteristics are that it is allocated for and shared between threads in the same block, and that its capacity is limited compared with the global memory. For example, NVIDIA Tesla V100 with compute capability 7.0, has \deleted{a limit of} 96KB \replaced{of}{for the} shared memory per SM and 80 SMs in total.
    \item \deleted{Registers: }The \replaced{\emph{register}}{ registers }    memory of NVIDIA GPUs is the fastest memory with the lowest latency. \replaced{Registers}{ These registers} are private to each thread and are not shared among other threads or blocks\replaced{. They}{, and they} are used for temporary data storage and computation in a single thread. The access\deleted{ing} speed for \deleted{the} registers is faster than the shared memory, but \replaced{not by orders of magnitude}{they are roughly comparable}. Register memory also has limited capacity. NVIDIA Tesla V100 has a limit of 64K 32-bit registers per SM.
\end{enumerate}

\subsubsection{Coalesced Memory Access}

An important feature of global memory \added{management} is coalesced memory access. \replaced{It is} {Coalesced memory access is} a memory access pattern that maximizes memory bandwidth and improves memory performance in parallel computing, particularly on GPUs. It refers to the way threads in a warp access memory in a contiguous and aligned manner, minimizing memory transactions and maximizing data transfer efficiency.

Depending on the architecture of the GPUs, the device can load or write the global memory via a 32-byte, 64-byte, or 128-byte transaction that is aligned with their sizes, respectively. If the threads in a warp access a continuous and aligned memory block with the size of 128-byte (this could be 32 4-byte single-precision floats, for example), it only costs one transaction. If the continuous memory block is misaligned with the 128-byte pattern, it costs two transactions. If the memory accessed is strided, which means no two elements are in the same 128-byte memory block, it will cost 32 transactions, even though it \replaced{accesses the same amount of memory}{equally accesses 128 bytes} as in the \replaced{coalesced}{continuous and aligned} case.


\subsection{GPU Kernel Design}

Our \deleted{implementation of the} GPU kernels are  enhanced and optimized versions of the code for unregularized OT that accompanies the paper~\cite{mai2021fast}, available on Github\footnote{The implementation is open-source and available at \href{https://github.com/vienmai/drot}{https://github.com/vienmai/drot}}. In the following subsections, we describe our enhancements of the basic kernel, and detail the extensions that we have made to implement quadratic and group-lasso regularization. We will also present run-time measurements to demonstrate that our kernels execute even faster than the "light-speed" per-iteration times of Sinkhorn. \deleted{ kernel run-times, illustrating that  the general optimizations that apply to both two main kernels of the regularized DROT and introduce the specific details of the two kernels, and the comparison on the per-iteration runtime.}

\subsubsection{General Optimization}\label{sec:general_opt}

Similar to the kernel described in \cite{mai2021fast}, the main kernel is designed in the way that each thread block contains 64 threads and is responsible for updating the sub-matrix with size $(64,64)$ in $X$. However, for small problem sizes, where the required number of blocks may be smaller than the number of SMs, 
this design may cause low utilization of the GPU, since not all of the SMs are utilized. To improve the utilization for small problem sizes, we adapt the number of columns assigned to each block to the problem size, \replaced{reducing the per-block work load and increasing the SM utilization. }{, which increases the required number of blocks to execute the kernel and makes fewer SMs idle.}

Accessing data in the global memory is critical to the performance of CUDA applications. In the kernel from~\cite{mai2021fast}, 
uncoalesced memory access happens when the number of rows is not a multiple of 32. For example, when the number of rows is 127 and the block tries to update the second column of $X$, the threads in the block will access 64 floats. However, the continuous block is not aligned with the 128-byte block structure, because the first element of the second column is in the same block as the last 31 elements in the first column. In this case, accessing 64 single-precision floats will require three 128-byte transactions while it could be two 128-byte transactions in an optimal solution. This can lead to strange situations where decreasing the problem size by one increases the per-iteration runtime increases significantly (6-20\%). To alleviate the problem and to make full use of the coalesced memory access, we updated the kernel from~\cite{mai2021fast} to handle shifted columns that match the 128-byte memory blocks. An illustration of the update is in Figure \ref{fig:GPUKernel} where the old working area of a block from  \cite{mai2021fast} is marked with "Planned Block Work Area" and the updated working area of a block is marked with "Actual Block Work Area". The comparison of the per-iteration runtime before and after the optimization in Table \ref{table:coalesced} shows that the update solves the problem of uncoalesced memory access, and that the implementation performs more predictably on various problem sizes.

\begin{table}[htbp]
\caption{Comparison of the per-iteration runtime before and after the update for coalesced memory access. Note that the focus here is the relative change in the per-iteration runtime after decreasing the problem size by one. The improvement in the per-iteration runtime under the same problem size is caused by other optimization techniques.}\label{table:coalesced}
\centering
\begin{tabular}{c|l|l}
\textbf{Problem Size} & \multicolumn{1}{c|}{\textbf{Runtime Before Update}} & \multicolumn{1}{c}{\textbf{Runtime After Update}} \\ \hline
$4096\times 4096$        & 0.3577                         & 0.3453                              \\
$4095\times 4095$        & 0.3812 (+6.58\%)               & 0.3417 (-1.04\%)                    \\\hline
$8192\times 8192$        & 1.281                         & 1.152                              \\
$8191\times 8191$        & 1.548  (+20.80\%)             & 1.153 (+0.10\%)                    \\\hline
$10240\times 10240$      & 1.916                         & 1.808                              \\
$10239\times 10239$      & 2.189 (+14.24\%)              & 1.781 (-1.52\%)  \\\hline                 
\end{tabular}
\end{table}

\subsubsection{Quadratically Regularized DROT}

Similar to the implementation in \cite{mai2021fast}, the matrices $X$ and $C$ are stored in the global memory in column-major order. For the problem sizes that we target, this is the only GPU memory that can fit the matrices. In addition, the performance penalty of storing them in a slower memory is limited since each element only needs to be loaded once for every update.\deleted{and it's because their sizes are large, and, more importantly, each element only needs to be loaded once for each update. It means that putting those two matrices in the shared memory is impractical and unnecessary.} 
Since the update of $X$ is done column by column, the auxillary variables
$\phi$ and $\varphi$
can be shared and reused among the threads in one block. To speedup the memory access, 
we load the required elements of $\phi$ and $\varphi$ into  shared memory at the beginning of the main kernel.

Figure \ref{fig:GPUKernel} 
illustrates how the update of one sub-column is done in a thread block of the \deleted{main kernel for the} quadratically regularized DROT \added{kernel}. The corresponding steps (1)-(5) in Figure \ref{fig:GPUKernel} are explained below.

\begin{enumerate}[(1)]
    \item Load the corresponding elements of $\phi$ and $\varphi$ from the shared memory.
    \item Load the corresponding elements of $X$ and $C$ from the global memory. Compute $[Y_{ij}-\rho C_{ij}]_+$ and $C_{ij}\cdot X_{ij}$. For the quadratically regularized case, we can directly update $X_{ij}$ with $Y_{ij}$ after multiplying it with the scale $1/(1+\rho \alpha)$.
    
    \item Perform in-warp reduction to compute
    the column sum and 
    the objective value in 
    the block.
    \item Use atomic operations to accumulate the results to the corresponding row and column sums.
    \item This step is only used by the group-lasso kernel, 
    introduced in Section \ref{sec:GLDROT}.
\end{enumerate}

Similar to the implementation by \cite{mai2021fast}, the iterations are divided into even iterations and odd iterations.
\begin{enumerate}
    \item For the even iterations, the kernel updates $X_{k+1}$ as
    $$
        X_{k+1} = \prox{\rho h}{[X_{k} + \phi_{k+1} \ones_n^\top + \ones_m \varphi_{k+1}^\top-\stepsize C]_+}-\stepsize C 
    $$
    \item For the odd iterations, it updates $X_{k+1}$ as
    $$
        X_{k+1} = \prox{\rho h}{[X_{k+1} + \phi_{k+1} \ones_n^\top + \ones_m \varphi_{k+1}^\top]_+}
    $$
\end{enumerate}

In this way, the kernel only needs to access the matrix $C$ once every second iteration.

Except for the general optimizations introduced in Section \ref{sec:general_opt}, the only difference from the basic kernel is  step (2) which additionally multiplies the result with a scalar. This minor change introduces negligible changes in the per-iteration runtime.

\subsubsection{Group-lasso Regularized DROT}\label{sec:GLDROT}

The kernel for group-lasso regularized DROT is slightly more involved, but also follows the five-step procedure illustrated in Figure~\ref{fig:GPUKernel}. More specifically, it performs the following steps:

\begin{enumerate}[(1)]
    \item Load the corresponding elements of $\phi$ and $\varphi$ from the shared memory.
    \item Load the corresponding elements of $X$ and $C$ from the global memory. Compute $[Y_{ij}-\rho C_{ij}]_+$ and $C_{ij}\cdot X_{ij}$. Compared with the quadratically regularized DROT, the kernel cannot update $X_{ij}$ until the scale of the group is ready in step $(5)$.
    \item Perform in-warp reduction. For the group-lasso regularized case, we also 
    reduce the sum of $[Y_{ij}-\rho C_{ij}]_+^2$ to compute 
    the norm of the group.
    \item Accumulate the results to the corresponding sum of the row, the sum of the column, and the sum of the squared elements within the group using atomic operations.
    \item The kernel can now 
    compute the scale for the group, and apply it to the computed $[Y_{ij}-\rho C_{ij}]_+$ so as to update $X_{ij}$. (For large problem sizes, where a thread block can only access part of the group, the process of reduction and broadcast is done in an additional kernel function).
\end{enumerate}

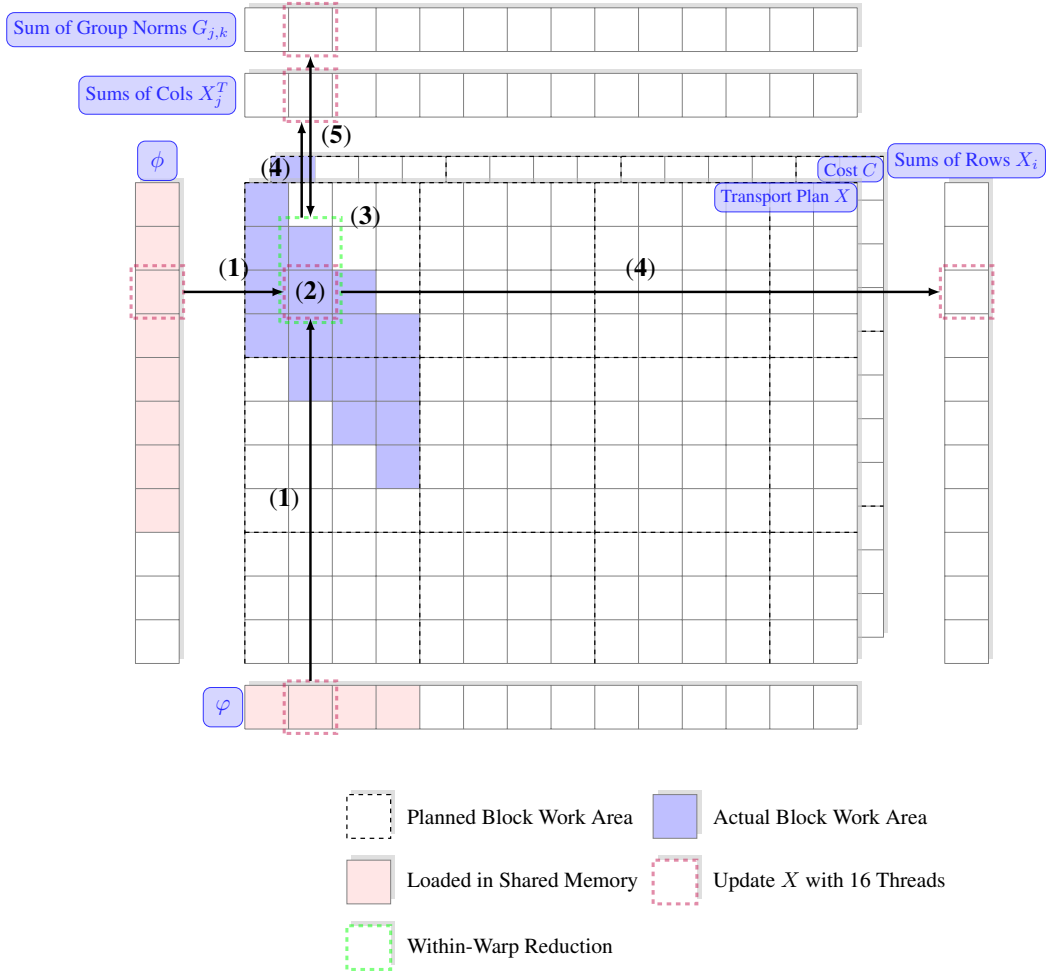
\begin{figure}[!htbp]
  \centering
  \resizebox{1\textwidth}{!}{
    \begin{tikzpicture}[every shadow/.style={shadow xshift=3pt, shadow yshift=3pt,fill=gray!150!black}]

        \newcommand{\NROWS}{11}
        \newcommand{\NCOLS}{14}
        \newcommand{\WORKX}{3}
        \newcommand{\WORKY}{2}
        \newcommand{\STEPTEXTSCALE}{1.7}
        
        \begin{scope}[xshift=0.6cm,yshift=0.6cm]
            \fill [white, drop shadow] (0,0) rectangle (\NCOLS,\NROWS);
            \draw [step=1cm,gray,very thin] (0,0) grid (\NCOLS,\NROWS);
            \node[below left, fill=blue!20, text=blue, draw=blue, scale=1.2, opacity=0.8, rounded corners=5pt] at (\NCOLS,\NROWS) {Cost $C$};

            \fill [blue!25] (0,\NROWS-4) rectangle (1,\NROWS);
            \fill [blue!25] (1,\NROWS-5) rectangle (2,\NROWS-1);
            \fill [blue!25] (2,\NROWS-6) rectangle (3,\NROWS-2);
            \fill [blue!25] (3,\NROWS-7) rectangle (4,\NROWS-3);

            \draw [step=4cm,thick, dashed, shift={(0,3)}] (0,-3) grid (\NCOLS,\NROWS-3);
        \end{scope}
        
        \begin{scope}[xshift=0cm,yshift=0cm]
            \fill [white, drop shadow] (0,0) rectangle (\NCOLS,\NROWS);

            \fill [blue!25] (0,\NROWS-4) rectangle (1,\NROWS);
            \fill [blue!25] (1,\NROWS-5) rectangle (2,\NROWS-1);
            \fill [blue!25] (2,\NROWS-6) rectangle (3,\NROWS-2);
            \fill [blue!25] (3,\NROWS-7) rectangle (4,\NROWS-3);
            
            \draw [step=1cm,gray,very thin] (0,0) grid (\NCOLS,\NROWS);

            \draw [step=4cm,thick, dashed, shift={(0,3)}] (0,-3) grid (\NCOLS,\NROWS-3);

            \node[below left, fill=blue!20, text=blue, draw=blue, scale=1.2, opacity=0.8, rounded corners=5pt] at (\NCOLS,\NROWS) {Transport Plan $X$};

            \draw[dashed ,opacity=0.5, line width=2pt, draw=purple!90] (\WORKY-1.1,\NROWS-\WORKX-0.1) rectangle (\WORKY+0.1,\NROWS-\WORKX+1.1);
            \node[scale=\STEPTEXTSCALE] at (\WORKY-0.5, \NROWS-\WORKX+0.5) {(\textbf{2})};

            \draw[dashed ,opacity=0.5, line width=2pt, draw=green] (\WORKY-1.2,\NROWS-\WORKX-0.2) rectangle (\WORKY+0.2,\NROWS-\WORKX+2.2);
            \node[anchor=west, scale=\STEPTEXTSCALE] at (\WORKY+0.2,\NROWS-\WORKX+2.2) {(\textbf{3})};

            \draw[-latex, ultra thick] (\WORKY-0.7,\NROWS-\WORKX+2.2) -- (\WORKY-0.7,\NROWS+1.4) node[midway, left, scale=\STEPTEXTSCALE] {(\textbf{4})};

            \draw[-latex, ultra thick] (\WORKY+0.2,\NROWS-\WORKX+0.5) -- (\NCOLS+1.9,\NROWS-\WORKX+0.5) node[midway, above, scale=\STEPTEXTSCALE] {(\textbf{4})};

        \end{scope}

        \begin{scope}[xshift=-2.5cm,yshift=0cm]
            \fill [white, drop shadow] (0,0) rectangle (1,\NROWS);
            \fill [pink!40] (0,\NROWS-8) rectangle (1,\NROWS);
            \draw [step=1cm,gray,very thin] (0,0) grid (1,\NROWS);
            \node[fill=blue!20, text=blue, draw=blue, scale=1.5, opacity=0.8, minimum width=0.6cm, minimum height=0.6cm, rounded corners=5pt] at (0.5,\NROWS+0.5) {$\phi$};

            \draw[dashed ,opacity=0.5, line width=2pt, draw=purple!90] (-0.1,\NROWS-\WORKX-0.1) rectangle (1.1,\NROWS-\WORKX+0.1+1);
            \draw[-latex, ultra thick] (1.1,\NROWS-\WORKX+0.5) -- (2.5+\WORKY-1-0.1,\NROWS-\WORKX+0.5) node[midway, above, scale=\STEPTEXTSCALE] {(\textbf{1})};
            
        \end{scope}

        \begin{scope}[xshift=0cm,yshift=-1.5cm]
            \fill [white, drop shadow] (0,0) rectangle (\NCOLS,1);
            \fill [pink!40] (0,0) rectangle (4,1);
            \draw [step=1cm,gray,very thin] (0,0) grid (\NCOLS,1);
            \node[fill=blue!20, text=blue, draw=blue, scale=1.5, opacity=0.8, minimum width=0.6cm, minimum height=0.6cm, rounded corners=5pt] at (-0.5,0.5) {$\varphi$};

            \draw[dashed ,opacity=0.5, line width=2pt, draw=purple!90] (\WORKY-0.1-1,-0.1) rectangle (\WORKY+0.1,1.1);
            \draw[-latex, ultra thick] (\WORKY-0.5,1.1) -- (\WORKY-0.5,1.5-0.1+\NROWS-\WORKX) node[midway, left, scale=\STEPTEXTSCALE] {(\textbf{1})};
        \end{scope}

        \begin{scope}[xshift=0cm,yshift=\NROWS cm+1.5cm]
            \fill [white, drop shadow] (0,0) rectangle (\NCOLS,1);
            \draw [step=1cm,gray,very thin] (0,0) grid (\NCOLS,1);
            \node[fill=blue!20, text=blue, draw=blue, scale=1.3, opacity=0.8, minimum width=0.6cm, minimum height=0.6cm, anchor=east, rounded corners=5pt] at (-0.2,0.5) {Sums of Cols $X^T_{j}$};

            \draw[dashed ,opacity=0.5, line width=2pt, draw=purple!90] (\WORKY-1.1,-0.1) rectangle (\WORKY+0.1,1.1);
        \end{scope}

        \begin{scope}[xshift=\NCOLS cm+2 cm,yshift=0cm]
            \fill [white, drop shadow] (0,0) rectangle (1,\NROWS);
            \draw [step=1cm,gray,very thin] (0,0) grid (1,\NROWS);
            \node[fill=blue!20, text=blue, draw=blue, scale=1.3, opacity=0.8, minimum width=0.6cm, minimum height=0.6cm, rounded corners=5pt] at (0.5,\NROWS+0.5) {Sums of Rows $X_{i}$};

            \draw[dashed ,opacity=0.5, line width=2pt, draw=purple!90] (-0.1,\NROWS-\WORKX-0.1) rectangle (1.1,\NROWS-\WORKX+1.1);
        \end{scope}

        \begin{scope}[xshift=0cm,yshift=\NROWS cm+3cm]
            \fill [white, drop shadow] (0,0) rectangle (\NCOLS,1);
            \draw [step=1cm,gray,very thin] (0,0) grid (\NCOLS,1);
            \node[fill=blue!20, text=blue, draw=blue, scale=1.3, opacity=0.8, minimum width=0.6cm, minimum height=0.6cm, anchor=east, rounded corners=5pt] at (-0.2,0.5) {Sum of Group Norms $G_{j,k}$};

            \draw[dashed ,opacity=0.5, line width=2pt, draw=purple!90] (\WORKY-1.1,-0.1) rectangle (\WORKY+0.1,1.1);

            \draw[latex-latex, ultra thick] (1.5,-0.1) -- (1.5, -3.8) node[midway, right, scale=\STEPTEXTSCALE] {(\textbf{5})};
        \end{scope}

        \begin{scope}[xshift=0cm,yshift=-4 cm]
            \fill [white, drop shadow] (\NCOLS/6,0) rectangle (\NCOLS/6+1,1);
            \draw [thick, dashed] (\NCOLS/6,0) 
            rectangle (\NCOLS/6+1,1);
            \node [anchor=west, scale=1.4] at (\NCOLS/6+1.2,0.5) {Planned Block Work Area};

            \fill [blue!25, drop shadow] (\NCOLS*4/6,0) rectangle (\NCOLS*4/6+1,1);
            \draw [gray, very thin] (\NCOLS*4/6,0) 
            rectangle (\NCOLS*4/6+1,1);
            \node [anchor=west, scale=1.4] at (\NCOLS*4/6+1.2,0.5) {Actual Block Work Area};
        \end{scope}

        \begin{scope}[xshift=0cm,yshift=-5.5cm]
            \fill [pink!40, drop shadow] (\NCOLS/6,0) rectangle (\NCOLS/6+1,1);
            \draw [gray, very thin] (\NCOLS/6,0) 
            rectangle (\NCOLS/6+1,1);
            \node [anchor=west, scale=1.4] at (\NCOLS/6+1.2,0.5) {Loaded in Shared Memory};

            \fill [white!25, drop shadow] (\NCOLS*4/6,0) rectangle (\NCOLS*4/6+1,1);
            \draw [dashed ,opacity=0.5, line width=2pt, draw=purple!90] (\NCOLS*4/6-0,0) 
            rectangle (\NCOLS*4/6+1,1);
            \node [anchor=west, scale=1.4] at (\NCOLS*4/6+1.2,0.5) {Update $X$ with 16 Threads};
        \end{scope}

        \begin{scope}[xshift=0cm,yshift=-7cm]
            \fill [white!40, drop shadow] (\NCOLS/6,0) rectangle (\NCOLS/6+1,1);
            \draw [dashed ,opacity=0.5, line width=2pt, draw=green] (\NCOLS/6,0) 
            rectangle (\NCOLS/6+1,1);
            \node [anchor=west, scale=1.4] at (\NCOLS/6+1.2,0.5) {Within-Warp Reduction};

        \end{scope}

    
    \end{tikzpicture}
  }
  \caption{GPU Kernel Design}\label{fig:GPUKernel}
\end{figure}

Compared with the quadratically regularized case, the group-lasso regularized DROT requires an additional round for reduction and broadcast within the group. Moreover, for large problem sizes, when the intermediate results for the group do not fit in the shared memory, it is inevitable to access the matrix $C$ in every iteration.

Both factors lead to an increase in the per-iteration runtime compared with the quadratically regularized DROT, and the runtime per iteration for GLDROT is around 2x of QDROT's. However, the per-iteration runtime of GLDROT is still comparable with the Sinkhorn-Knopp algorithm, which will be detailed in the next section.



\subsubsection{Per-iteration Runtime}\label{sec:per_iter_runtime}

The comparison of the per-iteration runtime among all methods is given in Figure \ref{fig:per_iter_cmp}. For QDROT and the Sinkhorn-Knopp algorithm, 12 random samples are generated with problem sizes from 100 to 10000. For GLDROT, 12 random samples are generated with problem sizes from 100 to 10000 and with the number of classes as 2 and 4, respectively. Note that the Sinkhorn-based implementation of group-lasso OT  in POT \cite{flamary2021pot} does not return the number of iterations, nor reports the per-iteration run-time. It is therefore impossible to estimate the corresponding run-time numbers for the Sinkhorn-based group-lasso OT \cite{cuturi2013sinkhorn}. We have therefore excluded this algorithm here, but expect it to have larger (and possibly significantly larger) per-iteration runtime than the Sinkhorn-Knopp algorithm \cite{cuturi2013sinkhorn}.


To make a fair comparison between the Sinkhorn-Knopp algorithm and ours, we cover four variants of the Sinkhorn-Knopp algorithm in the comparison. In the implementation of POT \cite{flamary2021pot}, it computes the primal residual once for every 10 iterations because the operation is relatively costly compared with the iteration itself. Moreover, to increase the numerical stability of the algorithm and to improve the quality of the results with smaller weights of the entropic regularizer, the Sinkhorn-Knopp algorithm is improved by moving the computation into log scale. By alternating whether computing the primal residual and computing in log scale, we present four versions of the Sinkhorn-Knopp algorithm.

As for our methods, due to the design that RDROT is working directly on $X$, the computation of the objective value and the primal residual introduces almost no increase in the per-iteration runtime.

From the results, it shows that our QDROT implementation has roughly the same performance as the Sinkhorn-Knopp algorithm from the perspective of the per-iteration runtime. The per-iteration runtime of the Sinkhorn-Knopp algorithm in log scale is roughly 4.3-9.4x longer than QDROT and is roughly 2.5-4.5x longer than GLDROT. If comparing GLDROT with different numbers of classes, it shows our implementation has comparable computation complexity even if the number of classes increases.

\begin{figure}[!htbp]
    \centering
    \includegraphics[width=0.7\textwidth]{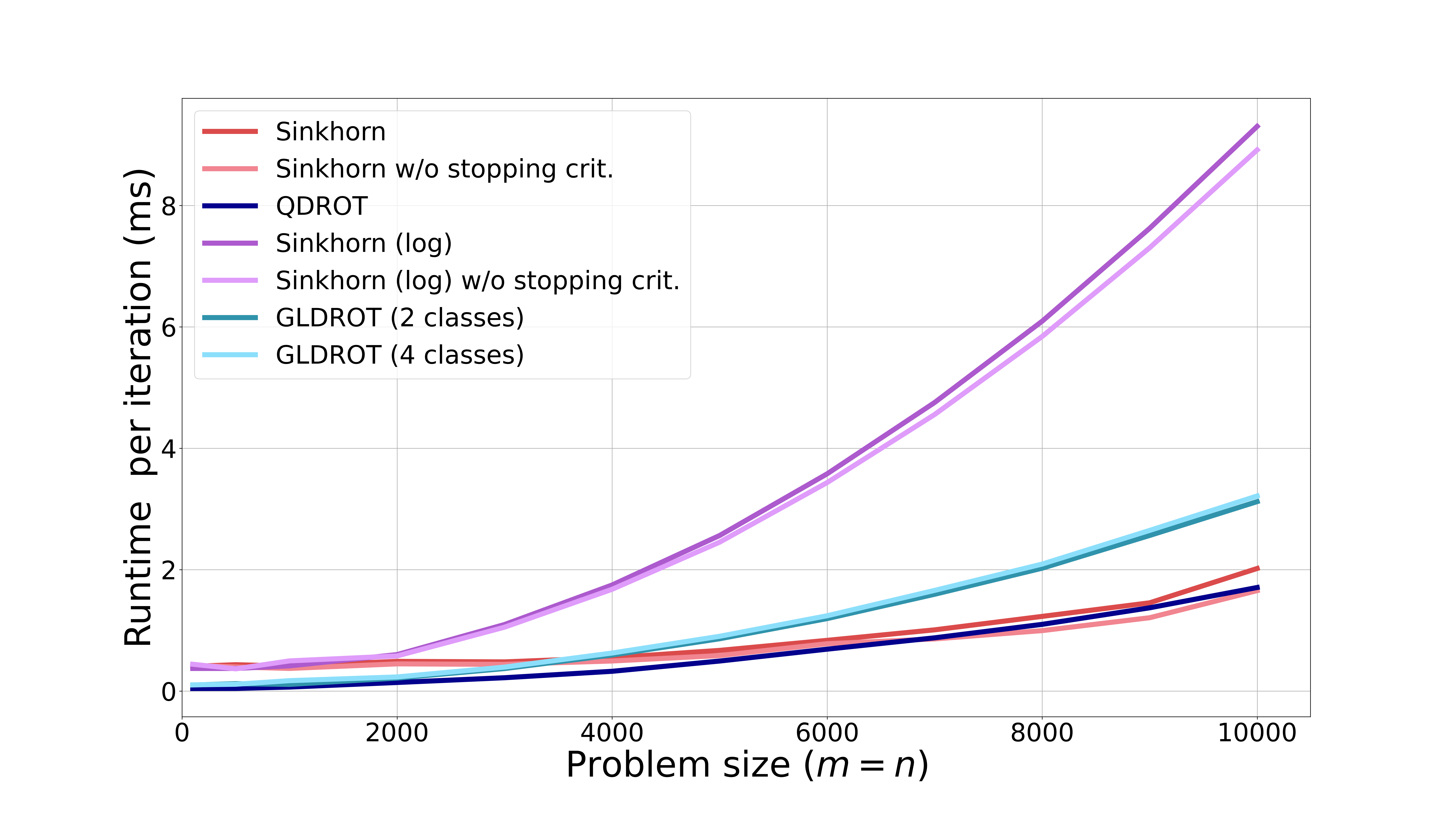}
    \caption{Comparison of the per-iteration cost for OT solvers. We compare the Sinkhorn-Knopp Algorithm \protect\cite{cuturi2013sinkhorn}, and a log domain variant (with and without stopping criterion computations every 10th iteration), against QDROT (ours): Quadratically regularized DROT, and GLDROT (ours): Group-lasso regularized DROT.}
    \label{fig:per_iter_cmp}
\end{figure}

\section{Additional Experiments}
To strengthen the numerical results of the paper, we added additional benchmarks of RDROT with the quadratic and group-lasso regularization with more datasets of different sizes and hyperparameters.
\paragraph{Quadratic regularization}
In the paper, we compared our solver with an L-BFGS method applied to the dual problem. Two problem sizes were considered: $m=1000$, $n=1000$, and $m=2000$, $n=3000$. In Figure~\ref{fig:quad:ot:benchmark:sup}, we rerun the benchmark on 4 additional problem sizes. Notice that all results are consistent with those presented in the paper, showing that our results generalize to more settings beyond the ones presented in the paper. 
\begin{figure}[!htbp]
    \centering
    \begin{subfigure}[b]{0.4\textwidth}\label{fig:quad100}
        \centering
        {\includegraphics[width=1.\textwidth]{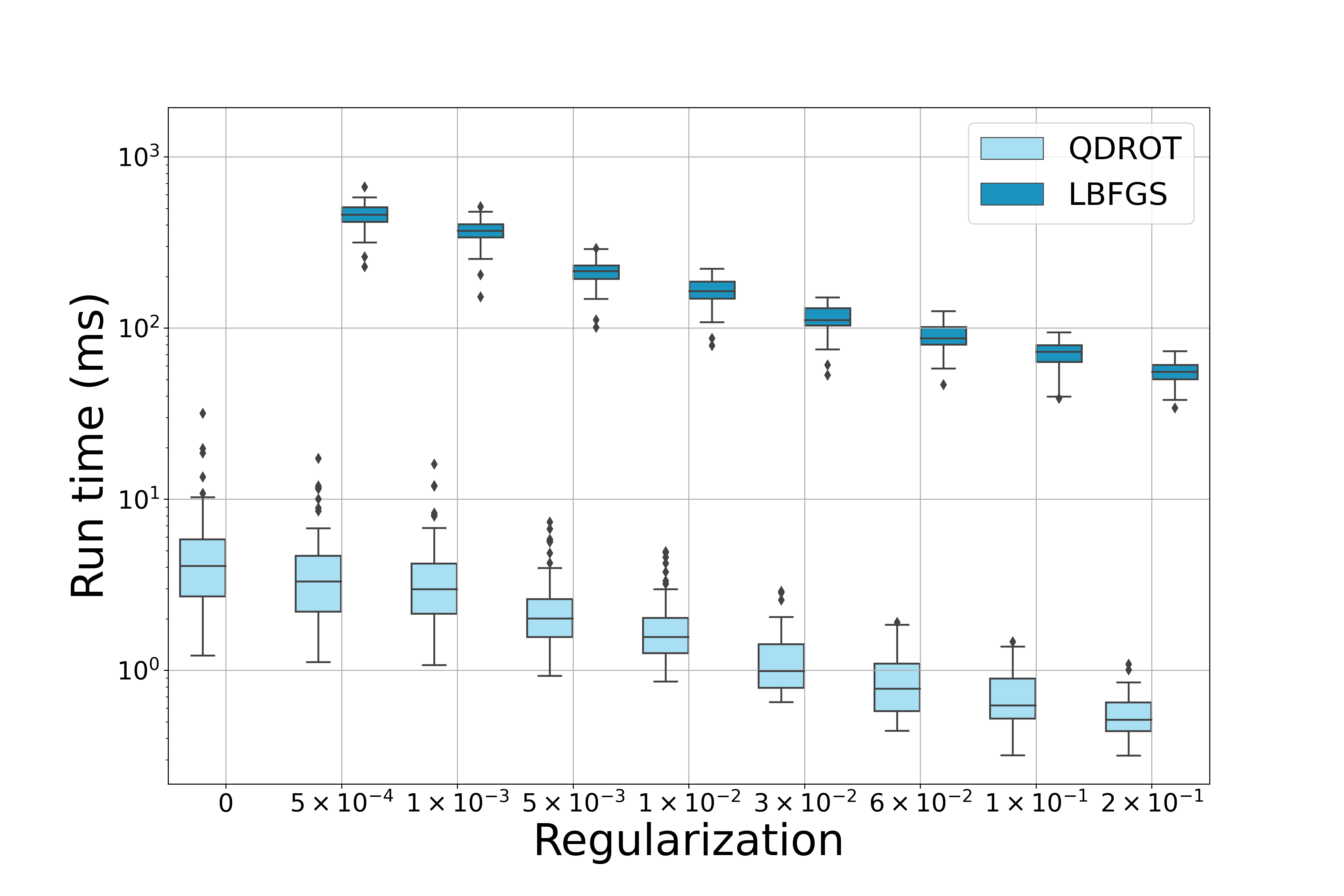}}
        \caption{$m=100$, $n=100$}
    \end{subfigure}
    ~
    \begin{subfigure}[b]{0.4\textwidth}\label{fig:quad500}
        \centering
        {\includegraphics[width=1.\textwidth]{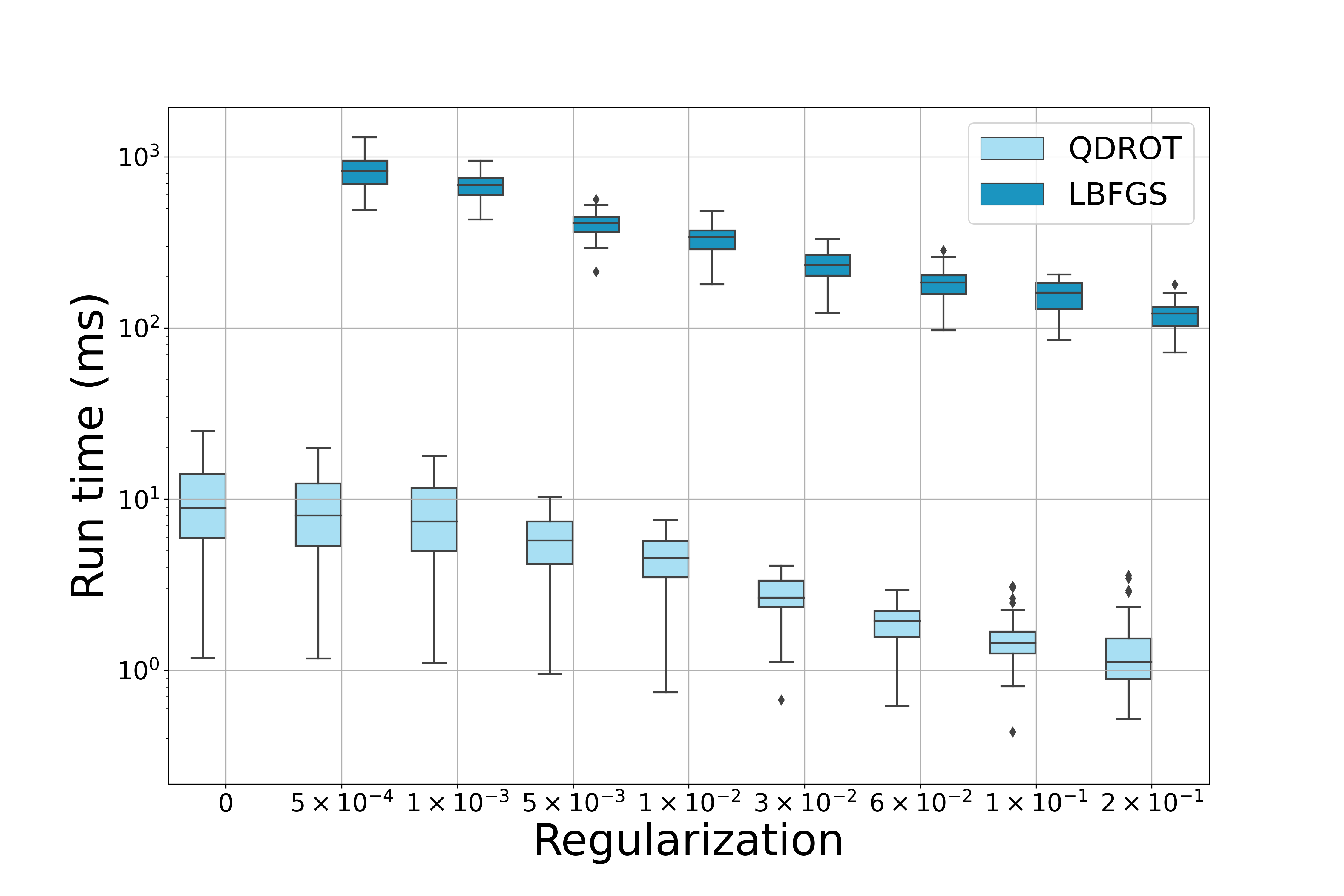}}
        \caption{$m=500$, $n=500$}
    \end{subfigure}
    \\
    \begin{subfigure}[b]{0.4\textwidth}\label{fig:quad5000}
        \centering
        {\includegraphics[width=1.\textwidth]{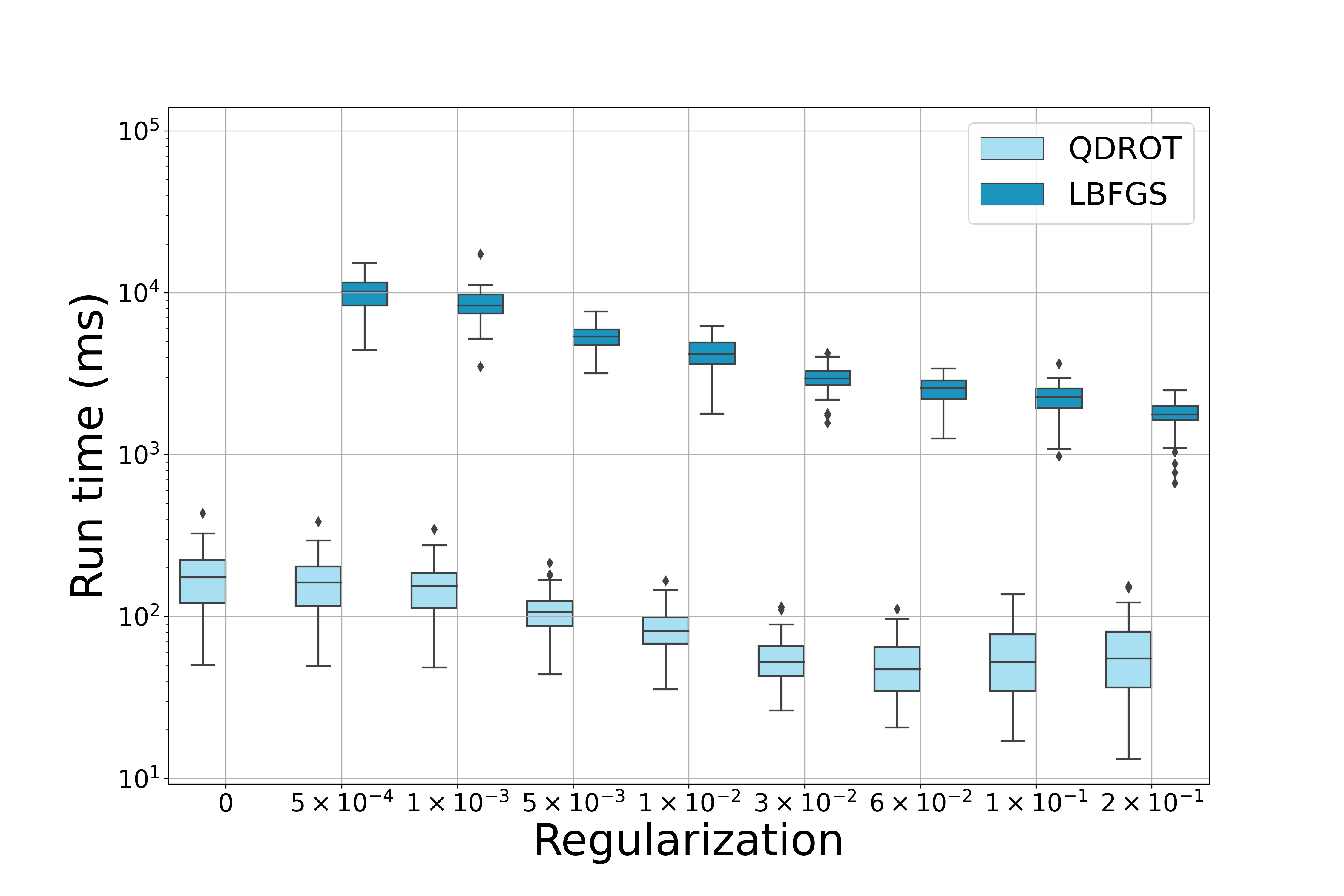}}
        \caption{$m=5000$, $n=5000$}
    \end{subfigure}
    ~
    \begin{subfigure}[b]{0.4\textwidth}\label{fig:quad10000}
        \centering
        {\includegraphics[width=1.\textwidth]{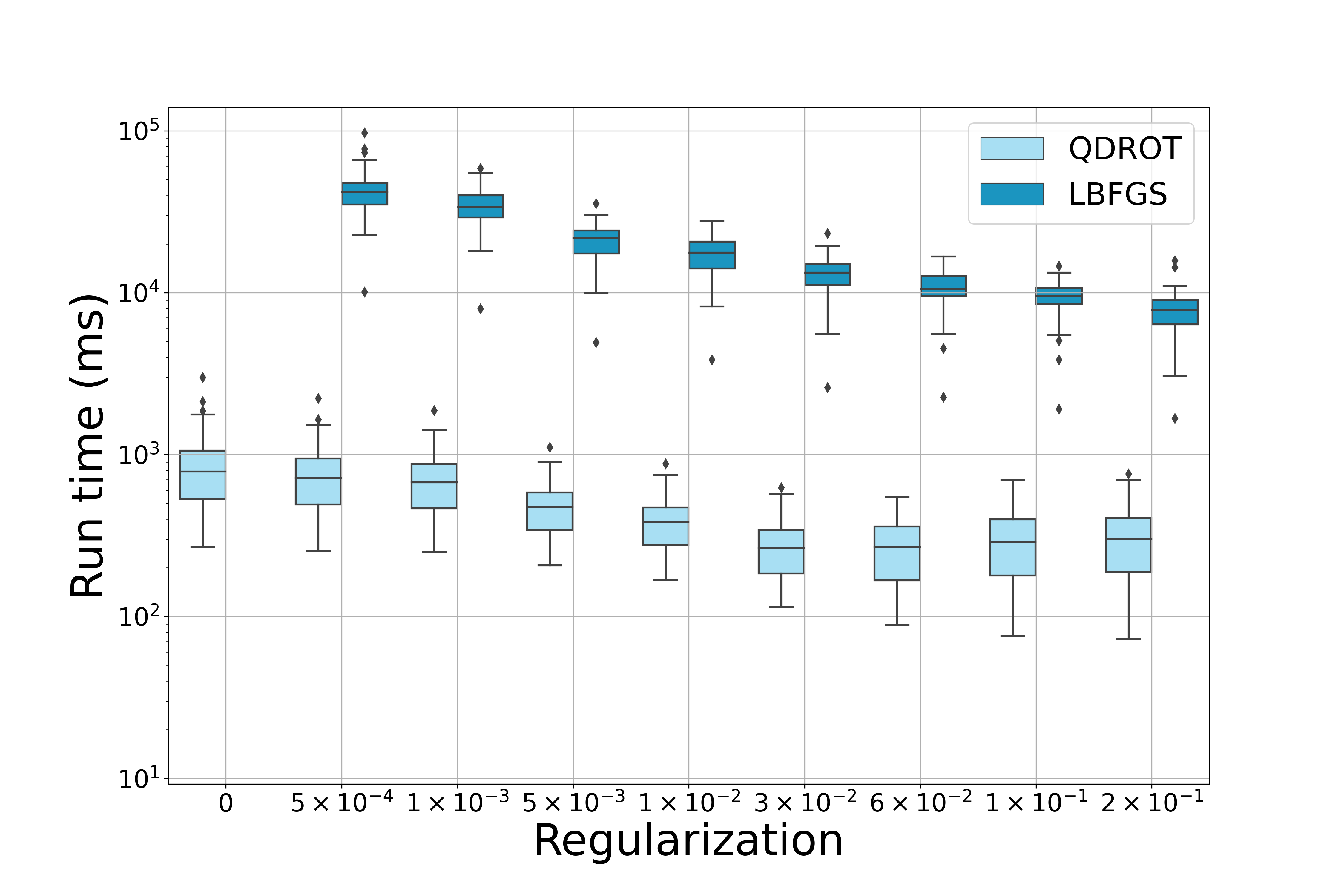}}
        \caption{$m=10000$, $n=10000$}
    \end{subfigure}
    \caption{Additional experiments of the RDROT (QDROT) and the dual L-BFGS method comparison for different quadratic regularization parameters and problem sizes. 50 datasets of an additional 4 problem sizes were simulated.}
    \label{fig:quad:ot:benchmark:sup}
\end{figure}

\paragraph{Group Lasso regularization}
To establish the computational advantages of RDROT applied to the Group-Lasso problem, we benchmarked it against Sinkhorn-based solver on simulated datasets of size $m=1000$, $n=1500$, for a selection of hyperparameters. It Table~{\ref{fig:gl:500},\ref{fig:gl:1000},\ref{fig:gl:1500}}, we rerun the benchmark on more problem sizes and hyperparameters, to test if the results generalize. Indeed, GLDROT consistently outperforms the state-of-the-art in all considered problems.

\begin{table}[!htbp]
\centering
\caption{GL experiments with $n=500$ and $m=500$}\label{fig:gl:500}
\begin{tabular}{l|c c| c c c | c c c}
    \hline
      & \multicolumn{2}{c|}{Reg.}  &\multicolumn{3}{c|}{Runtime (s) $\downarrow$}    & \multicolumn{3}{c}{Agg. $W_2$ dist. $\downarrow$}  \\
       Method & Ent & GL & Median  & $q10$ & $q90$ & Median  & $q10$ & $q90$ \\\hline
        GLSK & \texttt{1e-3} & \texttt{1e-6} & 1.26 & 1.24 & 2.98 & 0.35 & 0.084 & 4.96 \\ 
         & \texttt{1e-3} & \texttt{5e-4} & 1.23 & 1.22 & 2.93 & 0.351 & 0.0841 & 4.96 \\
         & \texttt{1e-3} & \texttt{5e-2} & 1.35 & 1.23 & 3.57 & 0.377 & 0.092 & 4.96 \\ 
         & \texttt{1e-2} & \texttt{1e-6} & 1.21 & 1.09 & 2.63 & 1.22 & 0.668 & 5.48 \\ 
         & \texttt{1e-2} & \texttt{5e-4} & 1.18 & 1.07 & 2.61 & 1.22 & 0.668 & 5.48 \\ \
         & \texttt{1e-2} & \texttt{5e-2} & 1.2 & 1.08 & 2.63 & 1.22 & 0.668 & 5.57 \\ 
         & \texttt{1e-1} & \texttt{1e-6} & 1.04 & 1.04 & 1.06 & 7.83 & 3.27 & 30.6 \\ 
         & \texttt{1e-1} & \texttt{5e-4} & 1.05 & 1.05 & 1.07 & 7.83 & 3.27 & 30.6 \\ 
         & \texttt{1e-1} & \texttt{5e-2} & 1.05 & 1.04 & 1.06 & 7.83 & 3.27 & 30.6 \\ 
         & \texttt{1.}& \texttt{1e-6} & 1.03 & 0.517 & 1.04 & 45.3 & 9.74 & 212 \\ 
         & \texttt{1.}& \texttt{5e-4} & 1.02 & 0.512 & 1.03 & 45.3 & 9.74 & 212 \\ 
         & \texttt{1.}& \texttt{5e-2} & 1.03 & 0.514 & 1.03 & 45.3 & 9.74 & 212 \\ \hline
        GLDROT & & \texttt{1e-6}  & 0.0924 & 0.0519 & 0.16 & 0.0618 & 0.028 & 0.309 \\ 
        & & \texttt{5e-4}  & 0.0553 & 0.0354 & 0.106 & 0.0801 & 0.0357 & 0.319 \\ 
        & & \texttt{5e-2}  & 0.0127 & 0.00889 & 0.0351 & 0.672 & 0.321 & 2.87 \\\hline
    \end{tabular}
\end{table}

\begin{table}[!htbp]
    \centering
    \caption{$m=1000$, $n=1000$}\label{fig:gl:1000}
    \begin{tabular}{l|c c| c c c | c c c}
    \hline
      & \multicolumn{2}{c|}{Reg.}  &\multicolumn{3}{c|}{Runtime (s) $\downarrow$}    & \multicolumn{3}{c}{Agg. $W_2$ dist. $\downarrow$}  \\
       Method & Ent & GL & Median  & $q10$ & $q90$ & Median  & $q10$ & $q90$ \\\hline
        GLSK & \texttt{1e-3} &\texttt{1e-6}& 4.97 & 4.91 & 9.27 & 1.66 & 0.405 & 11 \\ 
        & \texttt{1e-3} &\texttt{5e-4}& 4.9 & 4.82 & 9.19 & 1.66 & 0.405 & 11 \\ 
        & \texttt{1e-3} &\texttt{5e-2}& 5.52 & 4.91 & 15.9 & 1.64 & 0.389 & 11 \\ 
        & \texttt{1e-2} &\texttt{1e-6}& 4.9 & 4.79 & 5.53 & 3.06 & 0.926 & 8.97 \\ 
        & \texttt{1e-2} &\texttt{5e-4}& 4.79 & 4.66 & 5.5 & 3.06 & 0.926 & 8.97 \\ 
        & \texttt{1e-2} &\texttt{5e-2}& 4.88 & 4.73 & 5.55 & 3.15 & 0.926 & 9.03 \\ 
        & \texttt{1e-1}  &\texttt{1e-6}& 4.71 & 4.68 & 4.73 & 20.6 & 8.97 & 74.3 \\ 
        & \texttt{1e-1} &\texttt{5e-4}& 4.74 & 4.7 & 4.78 & 20.6 & 8.97 & 74.3 \\ 
        & \texttt{1e-1} &\texttt{5e-2}& 4.76 & 4.73 & 4.82 & 20.6 & 8.97 & 74.3 \\ 
        & \texttt{1.} &\texttt{1e-6}& 4.61 & 4.57 & 4.64 & 130 & 37.5 & 403 \\ 
        &\texttt{1.}&\texttt{5e-4}& 4.62 & 4.6 & 4.67 & 130 & 37.5 & 403 \\ 
        &\texttt{1.}&\texttt{5e-2}& 4.74 & 4.71 & 4.77 & 130 & 37.5 & 403 \\ \hline
        GLDROT & & \texttt{1e-6}&  0.109 & 0.0735 & 0.146 & 0.0999 & 0.0424 & 0.304 \\ 
        & & \texttt{5e-4}&  0.0598 & 0.045 & 0.0763 & 0.138 & 0.0632 & 0.44\\ 
        & & \texttt{5e-2}&  0.0281 & 0.0232 & 0.0374 & 3.16 & 1.14 & 9.01 \\ \hline
    \end{tabular}
\end{table}

\begin{table}[!htbp]
    \centering
    \caption{$m=1000$, $n=1500$}\label{fig:gl:1500}
    \begin{tabular}{l|c c| c c c | c c c}
    \hline
      & \multicolumn{2}{c|}{Reg.}  &\multicolumn{3}{c|}{Runtime (s) $\downarrow$}    & \multicolumn{3}{c}{Agg. $W_2$ dist. $\downarrow$}  \\
       Method & Ent & GL & Median  & $q10$ & $q90$ & Median  & $q10$ & $q90$ \\\hline
        GLSK & \texttt{1e-3} &\texttt{1e-6}& 3.82 & 3.75 & 9.03 & 0.311 & 0.0657 & 6.48 \\
        & \texttt{1e-3} &\texttt{5e-4}& 3.8 & 3.75 & 9 & 0.311 & 0.0657 & 6.48 \\ 
        & \texttt{1e-3} &\texttt{5e-2}& 3.8 & 3.77 & 10.6 & 0.345 & 0.0665 & 6.48 \\  
        & \texttt{1e-2} &\texttt{1e-6}& 3.78 & 3.73 & 6.8 & 1.21 & 0.69 & 5.47 \\ 
        & \texttt{1e-2} &\texttt{5e-4}& 3.8 & 3.76 & 6.82 & 1.21 & 0.69 & 5.47 \\
        & \texttt{1e-2} &\texttt{5e-2}& 3.82 & 3.79 & 6.88 & 1.21 & 0.69 & 5.47 \\ 
        & \texttt{1e-1} &\texttt{1e-6}& 3.7 & 3.67 & 3.72 & 8.24 & 3.47 & 31.6 \\
        & \texttt{1e-1} &\texttt{5e-4}& 3.75 & 3.73 & 3.78 & 8.24 & 3.47 & 31.6 \\ 
        & \texttt{1e-1} &\texttt{5e-2}& 3.77 & 3.73 & 3.81 & 8.24 & 3.47 & 31.6 \\ 
        &\texttt{1.}&\texttt{1e-6}& 3.73 & 1.86 & 3.77 & 45.5 & 9.92 & 218 \\
        &\texttt{1.}&\texttt{5e-4}& 3.69 & 1.85 & 3.73 & 45.5 & 9.92 & 218 \\ 
        &\texttt{1.}&\texttt{5e-2}& 3.75 & 1.88 & 3.78 & 45.5 & 9.92 & 218 \\ \hline
        GLDROT && \texttt{1e-6}&  0.113 & 0.0758 & 0.147 & 0.0475 & 0.0215 & 0.137 \\ 
        && \texttt{5e-4}&  0.0745 & 0.0549 & 0.0951 & 0.0529 & 0.0245 & 0.163 \\ 
        && \texttt{5e-2}& 0.0232 & 0.0178 & 0.0288 & 0.331 & 0.154 & 1.38 \\ \hline
    \end{tabular}
\end{table}
\vspace{10cm}

\section{Model specification of the generative Adversarial Model}\label{appendix:additional:GAN}
We adopted similar network structures and the same loss function as in \cite{salimans2018improving} and performed several experiments with image generation based on the MNIST and CIFAR10 datasets.

To define a loss, we need  the following sample based OT-cost. 
\begin{align*}
\mathcal{W}_{c,h}(\mathbf{X},\mathbf{Y}) &=\mathrm{OT}_h(C_{\mathbf{X},\mathbf{Y}}) \\&:= \inf_{M \in \R_+^{m\times n}}\bigg\{\InP{C_{\mathbf{X}, \mathbf{Y}}}{M}  + h(M): M\ones_n = m^{-1} \ones_m, \, M^\top \ones_m = n^{-1} \ones_n \bigg\}.
\end{align*}
Here $m$ and $n$ are the batch sizes corresponding to $\mathbf{X}$ and $\mathbf{Y}$,  and $C_{\mathbf{X}, \mathbf{Y}}$ is a matrix with pairwise distances between the samples of $\mathbf{X}$ and $ \mathbf{Y}$. This can used to derive the mini-batch energy distance \cite{salimans2018improving}, defined as
\begin{align*}
\mathcal{L}_h=\mathcal{W}_{c,h}(\mathbf{X},\mathbf{Y})&+\mathcal{W}_{c,h}(\mathbf{X}',\mathbf{Y})+\mathcal{W}_{c,h}(\mathbf{X},\mathbf{Y}')+\mathcal{W}_{c,h}(\mathbf{X}',\mathbf{Y}')\\
&-2\mathcal{W}_{c,h}(\mathbf{X},\mathbf{X}')-2\mathcal{W}_{c,h}(\mathbf{Y},\mathbf{Y}').
\end{align*}
$\mathbf{X}$, $\mathbf{X}'$ are two independent mini-batches from real data, while $\mathbf{Y}$, $\mathbf{Y}'$ are two independent mini-batches generated from the generator.

In the experiments, we used the cosine similarity to parameterize the cost, and we replaced the OT solver used in~\cite{salimans2018improving} - the Sinkhorn-Knopp algorithm, by the PyTorch wrapper for our OT solver with the quadratic regularizer.

The model structures are adapted from the ones used in \cite{salimans2018improving}. Weight normalization is used to construct the parameters of the models \cite{salimans2016weight}, and the activation functions are gated linear units \cite{dauphin2017language} and concatenated ReLUs \cite{shang2016understanding}. We trained the model with the Adam optimzier \cite{kingma2014adam} using an initial learning rate of $3\times 10^{-4}$, $\beta_1=0.5$ and $\beta_2=0.999$. The batch size is 1024 for the MNIST experiment and 2048 for the CIFAR10 experiment. We update the generator three times for every discriminator update. As for the parameters of our OT solver, we use $\epsilon={10}^{-4}$ as the stopping criterion and $\lambda=10^{-3}$ as the weight of the quadratic regularizer.

Different combinations of hyperparameters, model structures and schedules for the weight of the regularizers and the stopping criterion (the OT solver can allowed to produce low-accuracy solutions at the beginning of the training, for example) may potentially improve the results of the generated samples, but since GANs are not the focus of this paper, we leave such refinements as future work.

\newpage
\subsection{MNIST Architecture and Results}

\begin{table*}[h]
  \centering
    \begin{tabular}{c|c|c|c|c|c}
    \hline
     Opertion & Activation & Kernel Size & Stride & Padding & Output Shape \\\hline

    Sample $z$ & & & & & $[4]$ \\
    Linear & GLU & & & & $[128\cdot7\cdot7]$ \\
    Reshape &  & & & & $[128, 7, 7]$ \\
    Upsample $\times 2$ & & & & & $[128, 14, 14]$ \\
    2D Convolution & GLU & $[5,5]$ & 1 & same & $[128, 14, 14]$ \\
    Upsample $\times 2$ & & & & & $[128, 28, 28]$ \\
    2D Convolution & GLU & $[5,5]$ & 1 & same & $[64, 28, 28]$ \\
    2D Convolution & tanh & $[5,5]$ & 1 & same & $[1, 28, 28]$ \\
    \hline
    \end{tabular}
    \caption{Generator Architecture for MNIST}
\end{table*}

\begin{table*}[h]
  \centering
    \begin{tabular}{c|c|c|c|c|c}
    \hline
     Opertion & Activation & Kernel Size & Stride & Padding & Output Shape \\\hline

    2D Convolution & CReLU & $[5,5]$ & 1 & same & $[128, 28, 28]$ \\
    2D Convolution & CReLU & $[5,5]$ & 1 & same & $[128, 28, 28]$ \\
    2D Convolution & CReLU & $[5,5]$ & 2 & 2 & $[256, 14, 14]$ \\
    2D Convolution & CReLU & $[5,5]$ & 2 & 2 & $[256, 7, 7]$ \\
    Flatten & & & & & $[256\cdot 7 \cdot 7]$ \\
    L2 Normalization & & & & & $[256\cdot 7 \cdot 7]$ \\
    \hline
    \end{tabular}
    \caption{Discriminator Architecture for MNIST}
\end{table*}

\begin{figure*}[h]
    \centering
    \begin{minipage}{0.65\textwidth}
        \centering
        {\includegraphics[width=\textwidth]{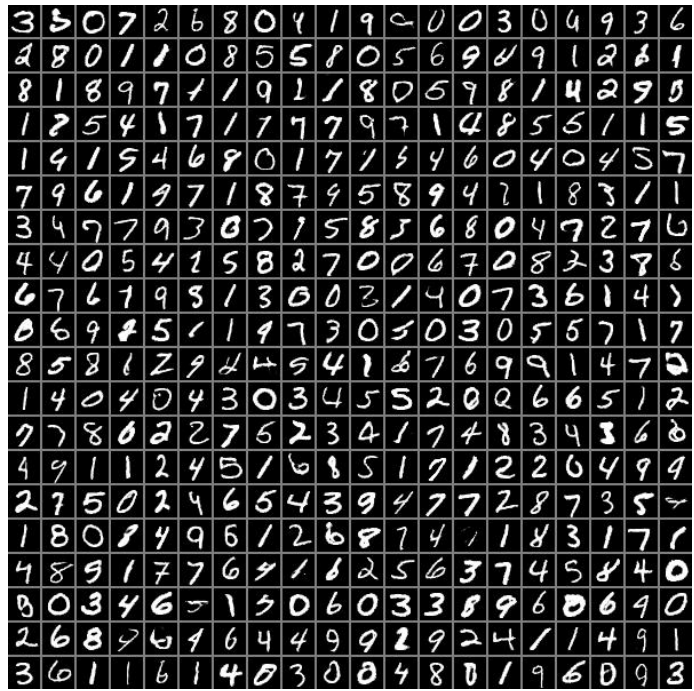}}
        \caption{Generated samples at 100th epoch for MNIST (Original size for each sample is 28x28 px)}\label{fig:mnist_gan}
    \end{minipage}
\end{figure*}

The generated samples in Figure \ref{fig:mnist_gan} are clear and shape, and no mode collapse can be observed (digits from 0 to 9 with various variants can be found in the figure). Compared with the results in \cite{genevay2018learning} (Figure 3), our results show higher quality.

\newpage
\subsection{CIFAR10 Architecture and Results}

\begin{table}[htbp]
  \centering
    \begin{tabular}{c|c|c|c|c|c}
    \hline
     Opertion & Activation & Kernel Size & Stride & Padding & Output Shape \\\hline

    Sample $z$ & & & & & $[100]$ \\
    Linear & GLU & & & & $[1024\cdot4\cdot4]$ \\
    Reshape &  & & & & $[1024, 4, 4]$ \\
    Upsample $\times 2$ & & & & & $[1024, 8, 8]$ \\
    2D Convolution & GLU & $[5,5]$ & 1 & same & $[512, 8, 8]$ \\
    Upsample $\times 2$ & & & & & $[512, 16, 16]$ \\
    2D Convolution & GLU & $[5,5]$ & 1 & same & $[256, 16, 16]$ \\
    Upsample $\times 2$ & & & & & $[256, 32, 32]$ \\
    2D Convolution & GLU & $[5,5]$ & 1 & same & $[128, 32, 32]$ \\
    2D Convolution & tanh & $[5,5]$ & 1 & same & $[3, 32, 32]$ \\
    \hline
    \end{tabular}
    \caption{Generator Architecture for CIFAR10}
\end{table}

\begin{table}[htbp]
  \centering
    \begin{tabular}{c|c|c|c|c|c}
    \hline
     Opertion & Activation & Kernel Size & Stride & Padding & Output Shape \\\hline
    2D Convolution & CReLU & $[5,5]$ & 1 & same & $[256, 32, 32]$ \\
    2D Convolution & CReLU & $[5,5]$ & 2 & 2 & $[512, 16, 16]$ \\
    2D Convolution & CReLU & $[5,5]$ & 2 & 2 & $[1024, 8, 8]$ \\
    2D Convolution & CReLU & $[5,5]$ & 2 & 2 & $[2048, 4, 4]$ \\
    Flatten & & & & & $[2048\cdot 4 \cdot 4]$ \\
    L2 Normalization & & & & & $[2048\cdot 4 \cdot 4]$ \\
    \hline
    \end{tabular}
    \caption{Discriminator Architecture for CIFAR10}
\end{table}

\begin{figure*}[!ht]
    \centering
    \begin{minipage}{0.65\textwidth}
        \centering
        {\includegraphics[width=\textwidth]{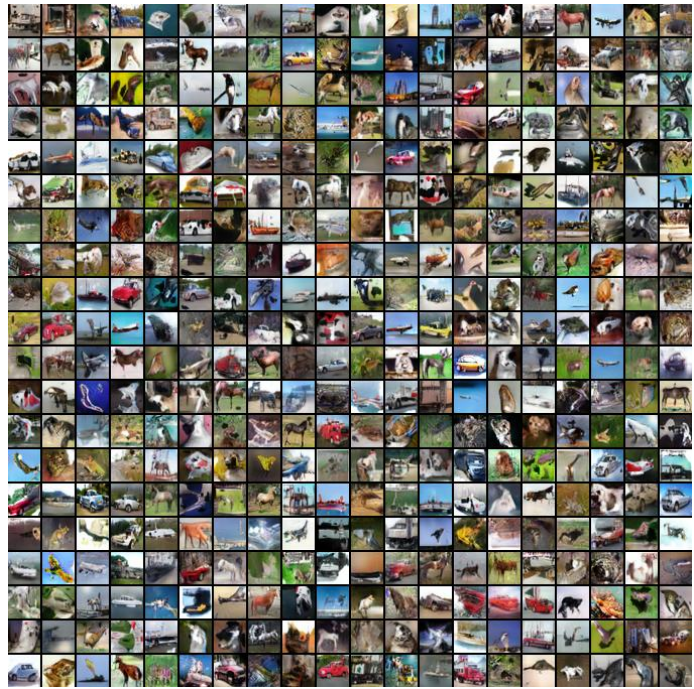}}
        \caption{Generated samples at 1900th epoch for CIFAR10 (Original size for each sample is 32x32 px)\label{fig:cifar10_gan}}
    \end{minipage}
\end{figure*}

In Figure \ref{fig:cifar10_gan}, most samples are recognizable, and the samples cover the 10 categories of the CIFAR10 dataset with enough diversity. Compared with the results in \cite{salimans2018improving} (Figure 4) which uses the Sinkhorn-Knopp algorithm as the OT solver in the mini-batch energy loss, our samples have similar quality.

\end{document}